\newtheorem{thm}{Theorem}[subsection]
\newtheorem{definition}{Definition}[subsection]
\title{Cooperative Multi-Agent Fairness and Equivariant Policies}
\author{
    Niko A. Grupen \textsuperscript{\rm 1},
    Bart Selman \textsuperscript{\rm 1},
    Daniel D. Lee \textsuperscript{\rm 2},
}
\begin{document}

\maketitle

\begin{abstract}
    We study fairness through the lens of cooperative multi-agent learning. Our work is motivated by empirical evidence that naive maximization of team reward yields unfair outcomes for individual team members. To address fairness in multi-agent contexts, we introduce team fairness, a group-based fairness measure for multi-agent learning. We then prove that it is possible to enforce team fairness during policy optimization by transforming the team's joint policy into an equivariant map. We refer to our multi-agent learning strategy as \textit{Fairness through Equivariance} (Fair-E) and demonstrate its effectiveness empirically. We then introduce Fairness through Equivariance Regularization (Fair-ER) as a soft-constraint version of Fair-E and show that it reaches higher levels of utility than Fair-E and fairer outcomes than non-equivariant policies. Finally, we present novel findings regarding the fairness-utility trade-off in multi-agent settings; showing that the magnitude of the trade-off is dependent on agent skill.
\end{abstract}

\section{Introduction}
\label{sec_intro}
Algorithmic fairness is an increasingly important sub-domain of AI. As statistical learning algorithms continue to automate decision-making in crucial areas such as lending \cite{fuster2020predictably}, healthcare \cite{potash2015predictive}, and education \cite{dorans2016fairness}, it is imperative that the performance of such algorithms does not rely upon sensitive information pertaining to the individuals for which decisions are made (e.g. race, gender). Despite its growing importance, fairness research has largely targeted prediction-based problems, where decisions are made for one individual at one time \cite{mitchell2018prediction}. Though recent studies have extended fairness to the multi-agent case \cite{jiang2019learning}, such work primarily considers social dilemmas in which team utility is in obvious conflict with the local interests of each team member \cite{leibo2017multi, rapoport1974prisoner, van2013psychology}.

Many real-world problems, however, must weigh the fairness implications of team behavior in the presence of a single overarching goal. In-line with recent work that has highlighted the importance of leveraging multi-agent learning to study socio-economic challenges such as taxation, social planning, and economic policy \cite{zheng2020ai}, we posit that understanding the range of team behavior that emerges from single-objective utility maximization is crucial for the development of fair multi-agent systems. For this reason, we study fairness in the context of \textit{cooperative multi-agent settings}. Cooperative multi-agent fairness differs from traditional game-theoretic interpretations of fairness (e.g. resource allocation \cite{elzayn2019fair, zhang2014fairness}, social dilemmas \cite{leibo2017multi}) in that it seeks to understand the fairness implications of emergent coordination learned by multi-agent teams that are bound by a shared reward. Cooperative multi-agent fairness therefore reframes the question---``Will agents cooperate or defect, given the choice between local and team interests?"---to a related but novel question---``Given the incentive to work together, do agents learn to coordinate effectively and fairly?"

Experimentally, we target pursuit-evasion (i.e. predator-prey) as a test-bed for cooperative multi-agent fairness. Pursuit-evasion allows us to simulate a number of important components of socio-economic systems, including: 
\begin{inparaenum}[(i)]
    \item Shared objectives: the overarching goal of pursuers is to capture an evader;
    \item Agent skill: the speed of the pursuers relative the evader serves as a proxy for skill;
    \item Coordination: success requires sophisticated cooperation by the pursuers.
\end{inparaenum}
Using pursuit-evasion, we study the fairness implications of behavior that emerges under variations of these ``socio-economic" parameters. Similar to prior work \cite{lowe2017multi, mordatch2018emergence, grupen2020low}, we cast pursuit-evasion as a multi-agent reinforcement learning (RL) problem. 

Our first result highlights the importance of shared objectives to cooperation. In particular, we compare policies learned when pursuers share in team success (mutual reward) to those learned when pursuers do not share reward (individual reward). We find that sophisticated coordination only emerges when pursuers are bound by mutual reward. Given individual reward, pursuers are not properly incentivized to work together. However, though mutual reward aides coordination, it does not specify how to coordinate fairly. In our experiments, we find that naive, unconstrained maximization of mutual reward yields unfair individual outcomes for cooperative teammates. In the context of pursuit-evasion, the optimal strategy is a form of \textit{role assignment}---the majority of pursuers act as supporting agents, shepherding the evader to one designated ``capturer" agent. Solving this issue is the subject of the rest of our analysis.

Addressing this form of unfair emergent coordination requires connecting fairness to multi-agent learning settings. To do this, we first introduce \textit{team fairness}, a group-based fairness measure inspired by demographic parity \cite{dwork2012fairness, feldman2015certifying}. Team fairness requires the distribution of a team's reward to be equitable across sensitive groups. We then show that it is possible to enforce team fairness during policy optimization by transforming the team's joint policy into an equivariant map. We prove that equivariant policies yield fair reward distributions under assumptions of agent homogeneity. We refer to our multi-agent learning strategy as \textit{Fairness through Equivariance} (Fair-E) and demonstrate its effectiveness empirically in pursuit-evasion experiments.

Despite achieving fair outcomes, Fair-E represents a binary switch---one can either choose fairness (at the expense of utility) or utility (at the expense of fairness). In many cases, however, it is advantageous to modulate between fairness and utility. To this end, we introduce a soft-constraint version of Fair-E that incentivizes equivariance through regularization. We refer to this method as \textit{Fairness through Equivariance Regularization} (Fair-ER) and show that it is possible to tune fairness constraints over multi-agent policies by adjusting the weight of equivariance regularization. Moreover, we show empirically that Fair-ER reaches higher levels of utility than Fair-E while achieving fairer outcomes than non-equivariant policy learning.

Finally, as in both prediction-based settings \cite{corbett2017algorithmic, zhao2019inherent} and in traditional multi-agent variants of fairness \cite{okun2015equality, le1990equity}, it is important to understand the ``cost" of fairness. We present novel findings regarding the fairness-utility trade-off for cooperative multi-agent settings. Specifically, we show that the magnitude of the trade-off depends on the skill level of the multi-agent team. When agent skill is high (making the task easier to solve), fairness comes with no trade-off in utility, but as skill decreases (making the task more difficult), gains in team fairness are increasingly offset by decreases in team utility.
\newline

\noindent \textbf{Preview of contributions}
In sum, our work offers the following contributions:
\begin{itemize}
    \item We show that mutual reward is critical to multi-agent coordination. In pursuit-evasion, agents trained with mutual reward learn to coordinate effectively, whereas agents trained with individual reward do not.
    \item We connect fairness to cooperative multi-agent settings. We introduce team fairness as a group-based fairness measure for multi-agent teams that requires equitable reward distributions across sensitive groups.
    \item We introduce Fairness through Equivariance (Fair-E), a novel multi-agent strategy leveraging equivariant policy learning. We prove that Fair-E achieves fair outcomes for individual members of a cooperative team.
    \item We introduce Fairness through Equivariance Regularization (Fair-ER) as a soft-constraint version of Fair-E. We show that Fair-ER reaches higher levels of utility than Fair-E while achieving fairer outcomes than non-equivariant learning.
    \item We present novel findings regarding the fairness-utility trade-off for cooperative settings. Specifically, we show that the magnitude of the trade-off depends on agent skill---when agent skill is high, fairness comes for free; whereas with lower skill levels, fairness is increasingly expensive.
\end{itemize}

\section{Related work}
\label{sec_related_work}
At a high-level, the prediction-based fairness literature can be split into two factions: individual fairness and group fairness. Introduced by \citet{dwork2012fairness}, individual fairness posits that two individuals with similar features should be classified similarly (i.e. similarity in feature-space implies similarity in decision-space). Such approaches rely on task-specific distance metrics with which similarity can be measured \cite{barocas2019fairmlbook, chouldechova2018frontiers}. Group fairness, on the other hand, attempts to achieve outcome consistency across sensitive groups. This idea has given rise to a number of methods such as statistical/demographic parity \cite{feldman2015certifying, johndrow2019algorithm, kamiran2009classifying, zafar2017fairness}, equality of opportunity \cite{hardt2016equality}, and calibration \cite{kleinberg2016inherent}. Recent work has extended fairness to the RL setting to consider the feedback effects of decision-making \cite{jabbari2017fairness, wen2021algorithms}.

In multi-agent systems, fairness is typically studied in game-theoretic settings in which individual payoffs and overall group utility are in obvious conflict \cite{de2005priority, de2007considerations}---such as resource allocation \cite{elzayn2019fair, zhang2014fairness} and social dilemmas \cite{leibo2017multi, rapoport1974prisoner, van2013psychology}. In multi-agent RL settings, these tensions have been addressed through myriad techniques, including reward shaping \cite{peysakhovich2017prosocial}, intrinsic reward \cite{wang2018evolving}, parameterized inequity aversion \cite{hughes2018inequity}, and hierarchical learning \cite{jiang2019learning}. Also related is the Shapley value: a method for sharing surplus across a coalition based on one’s contributions to the coalition \cite{shapley201617}. Shapley value-based credit assignment techniques have recently been shown to stabilize learning and achieve fairer outcomes when incorporated into the multi-agent RL problem \cite{wang2020shapley, li2021shapley}.

Our work differs from this prior work in two key ways. First we target fully-cooperative multi-agent settings \cite{hao2016fairness} in which fairness implications emerge naturally in the presence of a single overarching goal (i.e. mutual reward). In this fully-cooperative setting, individual and team incentives are not in obvious conflict. Our motivation for studying fully-cooperative team objectives follows from recent work that highlights the role of multi-agent learning in real-world problems characterized by shared objectives, including taxation and economic policy \cite{zheng2020ai}. Moreover, we study modifications to the utility-maximization objective that yield fairer outcomes by incentivizing agents to change their behavior, rather than redistributing outcomes after-the-fact. Most relevant is \citet{siddique2020learning} and \citet{zimmer2020learning}, which introduce a class of algorithms that successfully achieve fair outcomes for multi-agent teams through pre-defined social welfare functions that encode specific fairness principles. Our work, conversely, introduces task-agnostic methods for incentivizing fairness through both hard-constraints on agent policies and soft-constraints (i.e. regularization) \cite{liu2018delayed} on the RL objective.

Finally, discussion of the fairness-utility (or fairness-efficiency) trade-off has a long history in game-theoretic multi-agent settings \cite{okun2015equality, le1990equity, bertsimas2012efficiency, joe2013multiresource, bertsimas2011price} and is also prevalent throughout the prediction-based fairness literature \cite{menon2018cost}. Existing work has shown both theoretically \cite{calders2009building, kleinberg2016inherent, zhao2019inherent} and empirically \cite{dwork2012fairness, feldman2015certifying, kamiran2009classifying, lahoti2019operationalizing, pannekoek2021investigating} that gains in fairness come at the cost of utility. Our discussion of the fairness-utility trade-off is most similar to \citet{corbett2017algorithmic} in this regard, as we study the trade-off through the lens of constrained vs. unconstrained optimization. However, we take this trade-off a step further, outlining a relationship between fairness, utility, and agent skill that is not present in prior work.

\section{Preliminaries}
\label{sec_preliminaries}

\paragraph{Markov games}
A Markov game is a multi-agent extension of the Markov decision process (MDP) formalism \cite{littman1994markov}. For $n$ agents, it is represented by a state space $\mathcal{S}$, joint action space $\boldsymbol{\mathcal{A}} = \{\mathcal{A}_1, ... , \mathcal{A}_n\}$, joint observation space $\boldsymbol{\mathcal{O}} = \{\mathcal{O}_1, ... , \mathcal{O}_n\}$, transition function $\mathcal{T}:\mathcal{S} \times \boldsymbol{\mathcal{A}} \rightarrow \mathcal{S}$, and joint reward function $\boldsymbol{r}$. Following multi-objective RL \cite{zimmer2020learning}, we define a vectorial reward $\boldsymbol{r}:\mathcal{S} \times \boldsymbol{\mathcal{A}} \rightarrow \mathbb{R}^n$ with each component $r_i$ representing agent $i$'s contribution to $\boldsymbol{r}$. Each agent $i$ is initialized with a policy $\pi_i:\mathcal{O}_i \rightarrow \mathcal{A}$ (or deterministic policy $\mu_i$) from which it selects actions and an action-value function $Q_i: \mathcal{S} \times \mathcal{A}_i \rightarrow \mathbb{R}$ with which it judges the value of state-action pairs. Following action selection, the environment transitions from its current state $s_t$ to a new state $s_{t+1}$, as governed by $\mathcal{T}$, and produces a reward vector $\boldsymbol{r}_t$ indicating the strength or weakness of the group’s decision-making. In the episodic case, this process continues for a finite time horizon $T$, producing a trajectory $\tau = (s_1, \boldsymbol{a_1}, ..., s_{T-1}, \boldsymbol{a_{T-1}}, s_T)$ with probability:
\begin{equation}
    \label{eq_traj_prob}
    P(\tau) = P_\emptyset(s_1)\prod_{t=1}^T  P(s_{t+1} | s_t, a_t) \pi(a_t|s_t)
\end{equation}
\noindent where $P_\emptyset$ is a special distribution specifying the likelihood of each ``start" state.

\paragraph{Deep deterministic policy gradients}
Deep Deterministic Policy Gradients (DDPG) is an off-policy actor-critic algorithm for policy gradient learning in continuous action spaces \cite{lillicrap2015continuous}. DDPG leverages the deterministic policy gradient theorem \cite{silver2014deterministic}, which asserts that it is possible to find an optimal deterministic policy ($\mu_\phi$, with parameters $\phi$), with respect to a Q-function ($Q_{\omega}$, with parameters $\omega$), for the RL objective:
\begin{equation}
    \label{eq_ddpg_obj}
    J(\phi) = \mathbb{E}_s[Q_{\omega}(s,a) |_{s=s_t, a=\mu_{\phi}(s_t)} ]
\end{equation}
\noindent by performing gradient ascent with respect to the following gradient:
\begin{equation}
    \label{eq_ddpg_grad}
    \nabla_\phi J(\phi) = \mathbb{E}_s [\nabla_a Q_{\omega}(s,a)|_{s=s_t, a=\mu(s_t)}\nabla_\phi \mu(s)|_{s=s_t}]
\end{equation}
\noindent under mild conditions that confirm the existence of gradients $\nabla_a Q_{\omega}(s,a)$ and $\nabla_\phi \mu(s)$. For critic updates, DDPG follows batched TD-control, where the Q-function minimizes the loss function:
\small
\begin{equation}
    L(\omega) = \underset{s, a, r, s'}{\mathbb{E}} \big[\big(Q_\omega(s, a) - (r(s,a) + \gamma Q_\omega(s', \mu_\phi(s')))\big)^2\big]
\end{equation}
\normalsize
\noindent where ($s, a, r, s'$) are transition tuples sampled from a replay buffer. In this work, agents learn in a decentralized manner, each performing DDPG updates individually.

\paragraph{Fairness}
Prediction-based fairness considers a population of $n$ individuals (indexed $i = 1, ..., n$), each described by variables $v_i$ (i.e. features or attributes), which are separated into sensitive variables $z_i$ and other variables $x_i$. Variables $v_i$ are used to predict (typically binary) outcomes $y_i \in Y$ by estimating the conditional probability $P[Y=1 | V=v_i]$ through a scoring function $\psi:\mathcal{V} \rightarrow \{0,1\}$. Outcomes in turn yield decisions by applying a decision rule $\delta(v_i) = f(\psi(v_i))$. For example, in a lending scenario, a classifier may use $v_i$ to predict whether an individual $i$ will default on ($y_i = 0$) or repay ($y_i = 1$) his/her loan, which informs the decision to deny ($d_i = 0$) or approve ($d_i = 1$) the individual's loan application \cite{mitchell2018prediction}. Of particular relevance is group-based fairness, which examines how well outcome ($Y$) and decision ($D$) consistency is preserved across sensitive groups ($Z$) \cite{feldman2015certifying, zafar2017fairness}. We highlight the group-based measure of demographic parity, which requires that $D \perp Z$ or, equivalently, that $P[D=1|Z=z] = P[D=1|Z=z']$ for all $z, z'$ where $z \neq z'$.

\paragraph{Mutual Information} \label{sec_mi}
Given random variables $X_1 {\sim} P_{X_1}$ and $X_2 {\sim} P_{X_2}$ with joint distribution $P_{X_1 X_2}$, mutual information is defined as the Kullback-Leibler (KL-) divergence between the joint $P_{X_1X_2}$ and the product of the marginals $P_{X_1} \otimes P_{X_2}$:
\begin{equation}
    I(X_1;X_2) := D_{KL}(P_{X_1 X_2} || P_{X_1} \otimes P_{X_2}) \label{eq_dkl} \\
\end{equation}
\noindent Mutual information quantifies the dependence between $X_1$ and $X_2$ where, in Equation \ref{eq_dkl}, larger divergence represents stronger dependence. Importantly, mutual information can also be represented as the decrease in entropy of $X_1$ when introducing $X_2$:
\begin{equation}
    I(X_1;X_2) := H(X_1) - H(X_1|X_2)
    \label{eq_mi_ent}
\end{equation}

\paragraph{Equivariance}
Let $g_1$ and $g_2$ be G-sets of a group $G$ and $\sigma$ be a symmetry transformation over $G$. Then a function $f:g_1 \rightarrow g_2$ is equivariant with respect to $\sigma$ if the commutative relationship $f(\sigma \cdot x) = \sigma \cdot f(x)$ holds. Equivariance in the context of RL implies that separate policies will take the same actions under permutations of state space.
\begin{figure*}[t!]
    \centering
    \includegraphics[width=0.9\textwidth]{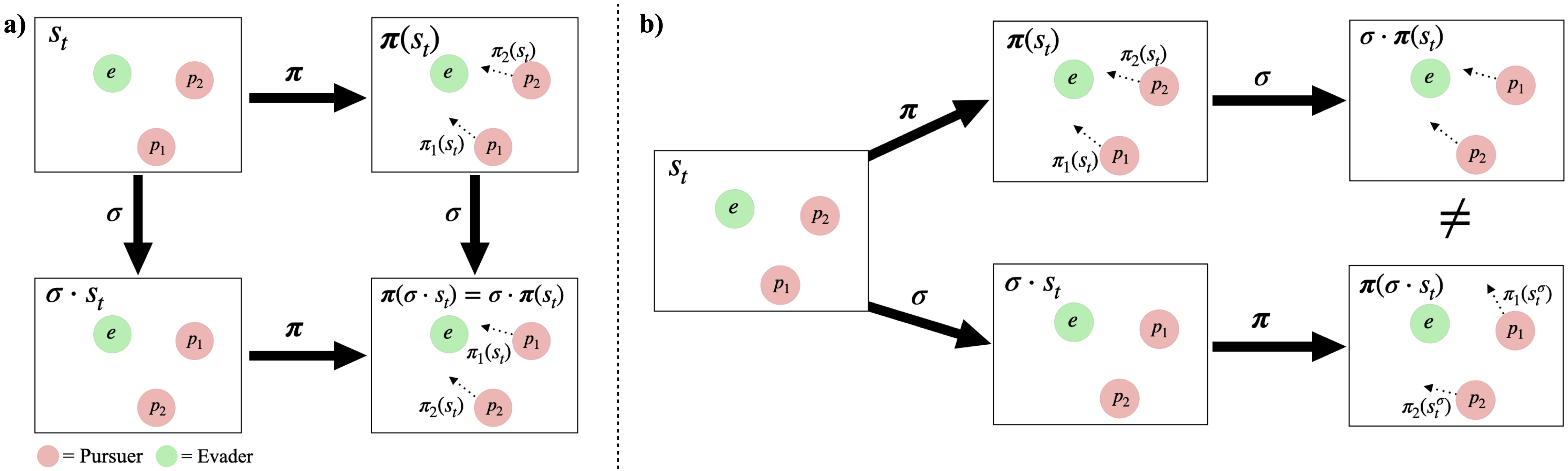}
    \caption{Snapshot of a pursuit-evasion game. Pursuers $p_1$ and $p_2$ (red) chase an evader $e$ (green) with the goal of capturing it. Given a state $s_t$, each pursuer selects its next heading (dotted arrow) from its policy, yielding the joint policy $\boldsymbol{\pi} {=} \{\pi_1(s_t), \pi_2(s_t)\}$. Evader action selection is omitted for clarity. \textbf{a)} For an equivariant joint policy, applying the transformation $\sigma$ to the state $s_t$ (producing $\sigma \cdot s_t$ which, in this example, swaps the positions of $p_1$ and $p_2$) and running the policy $\boldsymbol{\pi}(\sigma \cdot s_t)$ is equivalent to running the joint policy first and transforming the joint action afterwards (i.e. the commutative relationship $\boldsymbol{\pi}(\sigma \cdot s) = \sigma \cdot \boldsymbol{\pi}(s)$ holds). \textbf{b)} This commutative relationship does not necessarily hold for non-equivariant policies.}
    \label{fig_equivariance}
\end{figure*}
\paragraph{Pursuit-evasion}
Pursuit-evasion (i.e. predator-prey) is a classic setting for studying multi-agent coordination \cite{isaacs1999differential}. A pursuit-evasion game is defined between $n$ pursuers $\{p_1, ..., p_n \}$ and a single evader $e$. At any time $t$, an agent $i$ is described by its current position and heading $q_i^t = [x, y, \theta]_i$ and the environment is described by the position and heading of all agents $s_t = \{q_{p_1}, ..., q_{p_n}, q_e \}$. Upon observing $s_t$, each agent selects its next heading $\theta_i$ as an action. The chosen heading is pursued at the maximum allowed speed for each agent ($\lvert \vec{v}_p\rvert$ for the pursuers, $\lvert \vec{v}_e\rvert$ for the evader). We assume the evader to be part of the environment, defined by the potential-field policy:
\begin{equation}
    \label{eq_evader_objective}
    U(\theta_e) = \sum_i \bigg(\frac{1}{r_i}\bigg) \cos(\theta_e - \tilde{\theta}_i)
\end{equation}
where $r_i$ and $\tilde{\theta}_i$ are the L2-distance and relative angle between the evader and the $i$-th pursuer, respectively, and $\theta_e$ is the heading of the evader. Intuitively, $U(\theta_e)$ pushes the evader away from pursuers, taking the largest bisector between any two when possible. The goal of the pursuers---to capture the evader as quickly as possible---is mirrored in the reward function, where $r(s_t, a_t) {=} 50.0$ if the evader is captured and $r(s_t, a_t) {=} -0.1$ otherwise. Note that $\lvert \vec{v}_p \rvert$ serves as a proxy for agent skill level. When $\lvert \vec{v}_p \rvert > \lvert \vec{v}_e \rvert$, pursuers are skilled enough to capture the evader on their own, whereas $\lvert \vec{v}_p \rvert \leq \lvert \vec{v}_e \rvert$ requires that pursuers work together.

\section{Method}
\label{sec_fair_marl}
In this section, we present a novel interpretation of fairness for cooperative multi-agent teams. We then introduce our proposed method---Fairness through Equivariance (Fair-E)---and prove that it yields fair outcomes. Finally, we present Fairness through Equivariance Regularization (Fair-ER) as a soft-constraint version of Fair-E. 

\paragraph{Notation}
Let $n$ be the number of agents in a cooperative team. We describe each agent $i$ by variables $v_i = (z_i, x_i)$, consisting of sensitive variables $z_i \in Z$ and non-sensitive variables $x_i \in X$. In team settings, we define non-sensitive variables $x_i$ to be any variables that affect agent $i$’s performance on the team; such as maximum speed. Other variables that should not impact team performance, such as an agent’s identity or belonging to a minority group, are defined as sensitive variables $z_i$. We define fairness in terms of reward distributions $R$---where each $\boldsymbol{r} \in R$ is a vectorial team reward and each component $r_i$ is agent $i$'s contribution to $\boldsymbol{r}$. In the following definitions, let $I(R;Z)$ be the mutual information between reward distributions $R$ and sensitive variables $Z$.

\subsection{Team fairness}
\label{sec_team_fairness}
 We now define team fairness, a group-based fairness measure for multi-agent learning.
\begin{definition}[Exact Team Fairness]
    \label{def_exact_fairness}
    A set of cooperative agents achieves exact team fairness if $\textrm{I}(R;Z) = 0$.
\end{definition}

\begin{definition}[Approximate Team Fairness]
    \label{def_approx_fairness}
     A set of cooperative agents achieves approximate team fairness if $\textrm{I}(R;Z) \leq \epsilon$ for some $\epsilon > 0$.
\end{definition}

\noindent Team fairness connects cooperative multi-agent learning to group-based fairness, as $\textrm{I}(R;Z) = 0$ is equivalent to requiring $R \perp Z$ \cite{barocas2019fairmlbook}.

\subsection{Fairness through equivariance}
\label{sec_fairness_equivariance}
To enforce team fairness during policy optimization, we introduce a novel multi-agent learning strategy. The key to our approach is equivariance: by enforcing parameter symmetries \cite{ravanbakhsh2017equivariance} in each agent $i$'s policy network $\pi_{\phi_i}$, we show that equivariance propagates through the multi-agent RL problem. In particular, we show that the joint policy $\boldsymbol{\pi} = \{\pi_{\phi_1}, ..., \pi_{\phi_n} \}$ is an equivariant map with respect to permutations over state and action space. Further, we show that equivariance in policy-space begets equivariance in trajectory-space; namely, the terminal state $s_T$ following a multi-agent trajectory is equivariant to that trajectory's initial state $s_1$. Finally, we prove that equivariance in multi-agent policies and trajectories yields exact team fairness. A comparison of equivariant vs. non-equivariant joint policies is provided in Figure \ref{fig_equivariance}.

In the proofs that follow, we assume: (i) homogeneity across agents on the team---i.e. agents are identical in their non-sensitive variables $x$; (ii) the distribution of agent positions satisfies exchangeability. Finally, though our derivations utilize general (stochastic) policies, we provide equivalent proofs for deterministic policies in Appendix \ref{apdx_deterministic_proofs}.
    
\begin{thm}
    \label{thm_policy_eqv_map}
    If individual policies $\pi_{\phi_i}$ are symmetric, then the joint policy $\boldsymbol{\pi} = \{\pi_{\phi_1}, ..., \pi_{\phi_n} \}$ is an equivariant map.
\end{thm}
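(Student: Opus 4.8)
The plan is to make the group action explicit on both state space and joint‑action space and then push a global permutation through the three layers that separate a state from a joint action: (i) the per‑agent observation functions $o_i(\cdot)$, (ii) the shared policy map, and (iii) the concatenation that assembles $\boldsymbol{\pi}$. Fix the group $G = S_n$ of permutations of the $n$ pursuer indices, acting on a state $s = \{q_1,\dots,q_n,q_e\}$ by relabelling the pursuer blocks, $\sigma\cdot s = \{q_{\sigma^{-1}(1)},\dots,q_{\sigma^{-1}(n)},q_e\}$, and on a joint action $\boldsymbol{a}=(a_1,\dots,a_n)$ by $\sigma\cdot\boldsymbol{a}=(a_{\sigma^{-1}(1)},\dots,a_{\sigma^{-1}(n)})$. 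Under this convention the target identity $\boldsymbol{\pi}(\sigma\cdot s)=\sigma\cdot\boldsymbol{\pi}(s)$ is equivalent to the coordinatewise statement $\pi_{\phi_i}(\sigma\cdot s) = \pi_{\phi_{\sigma^{-1}(i)}}(s)$ for every $i$, so it suffices to establish the latter.

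First I would unpack the hypotheses. Homogeneity of the team gives $\phi_i=\phi$ for all $i$, i.e.\ a single shared policy function $\pi_\phi$; ``symmetric'' (parameter symmetry in the sense of \citet{ravanbakhsh2017equivariance}) gives that $\pi_\phi$, viewed as a function of an observation built from the agent's own pose together with an \emph{unordered} collection of teammate poses, is invariant to permutations of those teammate slots: $\pi_\phi(\rho\cdot o)=\pi_\phi(o)$ for every slot‑permutation $\rho$. The remaining ingredient is a bookkeeping lemma: the observation seen by agent $i$ in state $\sigma\cdot s$ equals, up to a teammate‑slot permutation $\rho_{i,\sigma}$, the observation seen by agent $\sigma^{-1}(i)$ in state $s$, i.e.\ $o_i(\sigma\cdot s) = \rho_{i,\sigma}\cdot o_{\sigma^{-1}(i)}(s)$ --- relabelling all pursuers by $\sigma$ moves whatever data agent $\sigma^{-1}(i)$ had into agent $i$'s slot while merely reshuffling the order in which the other pursuers appear in that agent's observation.

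Chaining these facts gives the result: $\pi_{\phi_i}(\sigma\cdot s) = \pi_\phi\big(o_i(\sigma\cdot s)\big) = \pi_\phi\big(\rho_{i,\sigma}\cdot o_{\sigma^{-1}(i)}(s)\big) = \pi_\phi\big(o_{\sigma^{-1}(i)}(s)\big) = \pi_{\phi_{\sigma^{-1}(i)}}(s)$, where the outer equalities are the definition of the induced per‑agent policy, the second is the bookkeeping lemma, and the third is invariance of the symmetric network. Reassembling over $i$ yields $\boldsymbol{\pi}(\sigma\cdot s) = \sigma\cdot\boldsymbol{\pi}(s)$, which is exactly equivariance of the joint policy between the $G$‑sets of states and of joint actions. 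For stochastic policies the same chain is read as an equality of conditional action distributions, with $\sigma$ acting on $\boldsymbol{\mathcal{A}}$ by the corresponding pushforward; one checks measurability of the relabelling maps and that pushforward commutes with the coordinate permutation, and the version in Appendix~\ref{apdx_deterministic_proofs} is the specialization to point masses.

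I expect the main obstacle to be precisely this bookkeeping step: stating carefully how one global relabelling $\sigma$ simultaneously (a) selects which agent's observation we are looking at and (b) induces an internal permutation $\rho_{i,\sigma}$ of the teammate slots inside that observation, and then matching $\rho_{i,\sigma}$ against the parameter‑symmetry group that the ``symmetric'' network is assumed to respect. Everything else is routine once the observation model and the two group actions are pinned down; exchangeability of agent positions is not needed for this pointwise statement (it enters later, for the trajectory‑level results) but can be invoked to keep the domain of $\boldsymbol{\pi}$ a genuine $G$‑set.
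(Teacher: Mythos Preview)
Your proof is correct and considerably more detailed than the paper's. The paper dispatches the theorem in a single displayed chain: it fixes a permutation $\sigma$, invokes parameter sharing $\phi_1=\cdots=\phi_n$, and simply writes $\boldsymbol{\pi}(\sigma\cdot s)=\boldsymbol{\pi}(s^{\sigma})=\boldsymbol{a}^{\sigma}=\sigma\cdot\boldsymbol{a}=\sigma\cdot\boldsymbol{\pi}(s)$, treating the crucial middle step $\boldsymbol{\pi}(s^{\sigma})=\boldsymbol{a}^{\sigma}$ as immediate from shared parameters. It never introduces per‑agent observation maps $o_i$, a bookkeeping lemma relating $o_i(\sigma\cdot s)$ to $o_{\sigma^{-1}(i)}(s)$, or any explicit slot‑permutation invariance of the network. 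What your route buys is an actual justification of that middle step: you factor the passage from global state to individual action through $o_i$, show that a global relabelling simultaneously changes \emph{which} agent's observation we are looking at and reshuffles the teammate slots inside it, and then kill the reshuffle with the Ravanbakhsh‑style invariance hypothesis. That extra hypothesis is real structure the paper leaves implicit; without it, parameter sharing applied to an observation that lists teammates in a fixed order would not by itself yield equivariance. The paper's version is shorter and reads the result off the definition; yours is the one that would survive a referee asking why $\boldsymbol{\pi}(s^{\sigma})=\boldsymbol{a}^{\sigma}$ holds.
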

\begin{proof}
    Let $\sigma$ be a permutation operator that, when applied to a vector (such as a state $s_t$ or action $\boldsymbol{a}_t$), produces a permuted vector ($\sigma \cdot s_t = s^\sigma_t$ or $\sigma \cdot \boldsymbol{a}_t = \boldsymbol{a}^\sigma_t$, respectively). Under parameter symmetry (i.e. $\phi_1$=$\phi_2{= \cdots =}\phi_n$), we have:
    \begin{equation}
        \label{eq_equivariant_policy}
        \boldsymbol{\pi}(\sigma \cdot s) 
        = \boldsymbol{\pi}(s^{\sigma}) 
        = \boldsymbol{a}^\sigma 
        = \sigma \cdot \boldsymbol{a} 
        = \sigma \cdot \boldsymbol{\pi}(s)
    \end{equation}
    \noindent where the commutative relationship $\boldsymbol{\pi}(\sigma \cdot s) = \sigma \cdot \boldsymbol{\pi}(s)$ implies that $\boldsymbol{\pi}$ is an equivariant map. Commutativity here is crucial---Equation \eqref{eq_equivariant_policy} and therefore Theorems \ref{thm_states_eqv} and \ref{thm_sym_fair} do not hold for non-equivariant policies (see Figure \ref{fig_equivariance}b).
\end{proof}

\begin{thm}
    \label{thm_states_eqv}
    Let $p^\pi(s \rightarrow s', k)$ be the probability of transitioning from state $s$ to state $s'$ in $k$ steps \cite{sutton2018reinforcement}.
    Given that the joint policy $\boldsymbol{\pi}$ is an equivariant map, it follows that $p^{\boldsymbol{\pi}}(s_1 \rightarrow s_T, T) = p^{\boldsymbol{\pi}}(s_1^\sigma \rightarrow s_T^\sigma, T)$.
\end{thm}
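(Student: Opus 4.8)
The plan is to prove the identity by induction on the number of steps $k$ appearing in $p^{\boldsymbol{\pi}}(s \to s', k)$, using the multi-step (Chapman--Kolmogorov) decomposition together with the equivariance established in Theorem~\ref{thm_policy_eqv_map}. Before the induction I would isolate one auxiliary fact: the environment transition dynamics are themselves permutation-equivariant, i.e. $\mathcal{T}(s'^{\sigma} \mid s^{\sigma}, \boldsymbol{a}^{\sigma}) = \mathcal{T}(s' \mid s, \boldsymbol{a})$, or equivalently $P(s^{\sigma}_{t+1} \mid s^{\sigma}_t, \boldsymbol{a}^{\sigma}_t) = P(s_{t+1} \mid s_t, \boldsymbol{a}_t)$. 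This follows from agent homogeneity in the non-sensitive variables, exchangeability of the position distribution, and the fact that the evader's fixed potential-field policy in Equation~\eqref{eq_evader_objective} depends on the pursuers only through the unordered multiset of distances and relative angles; hence relabeling agents and their positions by $\sigma$ commutes with stepping the environment.

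For the base case ($k=1$), write $p^{\boldsymbol{\pi}}(s \to s', 1) = \sum_{\boldsymbol{a}} \boldsymbol{\pi}(\boldsymbol{a} \mid s)\, P(s' \mid s, \boldsymbol{a})$ (an integral over the joint action space in the continuous case). Applying $\sigma$ to both $s$ and $s'$ and substituting $\boldsymbol{a} \mapsto \boldsymbol{a}^{\sigma}$ — a bijection of the joint action space — I would invoke Theorem~\ref{thm_policy_eqv_map}, whose commutative relationship $\boldsymbol{\pi}(\sigma \cdot s) = \sigma \cdot \boldsymbol{\pi}(s)$ gives the stochastic-policy statement $\boldsymbol{\pi}(\boldsymbol{a}^{\sigma} \mid s^{\sigma}) = \boldsymbol{\pi}(\boldsymbol{a} \mid s)$, together with the equivariance of $\mathcal{T}$ just noted. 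Since $\sigma$ acts as a coordinate relabeling, its Jacobian is $1$, so the change of variables introduces no extra factor, and collecting terms yields $p^{\boldsymbol{\pi}}(s^{\sigma} \to s'^{\sigma}, 1) = p^{\boldsymbol{\pi}}(s \to s', 1)$.

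For the inductive step, assume the identity holds for $k$ and decompose $p^{\boldsymbol{\pi}}(s \to s', k+1) = \sum_{s''} p^{\boldsymbol{\pi}}(s \to s'', k)\, p^{\boldsymbol{\pi}}(s'' \to s', 1)$. Reindexing the sum by $s'' \mapsto s''^{\sigma}$, which is again a measure-preserving bijection of $\mathcal{S}$, I apply the induction hypothesis to the first factor and the base case to the second, obtaining $p^{\boldsymbol{\pi}}(s^{\sigma} \to s'^{\sigma}, k+1) = p^{\boldsymbol{\pi}}(s \to s', k+1)$. Setting $s = s_1$, $s' = s_T$, and $k+1 = T$ gives the claimed $p^{\boldsymbol{\pi}}(s_1 \to s_T, T) = p^{\boldsymbol{\pi}}(s_1^{\sigma} \to s_T^{\sigma}, T)$.

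The main obstacle I expect is not the induction, which is routine once the decomposition is in place, but cleanly justifying the equivariance of the transition dynamics $\mathcal{T}$ — this is the step where the structural assumptions (homogeneity, exchangeability, and the symmetry of the evader's policy) actually do the work, so I would state it carefully rather than treat it as obvious. A secondary point is the continuous-state setting: I would phrase $p^{\boldsymbol{\pi}}(s \to s', k)$ as a transition density, note that permutation operators act as coordinate relabelings with unit Jacobian, and record this as a standing assumption so that every change-of-variables step goes through without additional measure-theoretic bookkeeping.
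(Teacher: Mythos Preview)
Your proposal is correct, and the core ingredient---equivariance of the one-step transition kernel $P(s_{t+1}^\sigma \mid s_t^\sigma, \boldsymbol{a}_t^\sigma) = P(s_{t+1} \mid s_t, \boldsymbol{a}_t)$---is exactly what the paper isolates first as well. The difference is in how the $T$-step statement is assembled from this: you run an induction via the Chapman--Kolmogorov decomposition $p^{\boldsymbol{\pi}}(s \to s', k{+}1) = \int p^{\boldsymbol{\pi}}(s \to s'', k)\, p^{\boldsymbol{\pi}}(s'' \to s', 1)\, ds''$, whereas the paper writes out the full trajectory probability from Equation~\eqref{eq_traj_prob}, observes that every factor is invariant under a simultaneous $\sigma$-relabeling, and then marginalizes out all intermediate states in one shot. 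Your route is slightly longer but cleaner: it never needs to invoke the start-state distribution $P_\emptyset$ (which the paper folds in even though $p^{\boldsymbol{\pi}}(s_1 \to s_T, T)$ is already conditioned on $s_1$), and it makes the change-of-variables with unit Jacobian explicit at each step rather than implicit in a product of integrals. The paper's direct marginalization is more compact and shows at a glance that equivariance of each factor suffices; your induction makes the recursion structure of $p^{\boldsymbol{\pi}}(\cdot \to \cdot, k)$ do the bookkeeping. Either is fine for this statement.
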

\begin{proof}
    It follows from our assumption of agent homogeneity that permuting a state $\sigma \cdot s_t$, which (from Theorem \ref{thm_policy_eqv_map}) permutes action selection $\sigma \cdot \boldsymbol{a}_t$, also permutes the environment's transition probabilities:
    \begin{equation*}
        P(s_{t+1}|s_t, \boldsymbol{a}_t) = P(s_{t+1}^\sigma|s_t^\sigma, \boldsymbol{a}_t^\sigma)
    \end{equation*}
    This is because, from the environment's perspective, a state-action pair is indistinguishable from the state-action pair generated by the same agents after swapping their positions and selected actions. Assuming a uniform distribution of start-states $P_\emptyset$, we also have $P_\emptyset(s_1) = P_\emptyset(s_1^\sigma)$.
     Recall the probability of a trajectory from Equation \eqref{eq_traj_prob}. Given the equivariant function $\boldsymbol{\pi}$ and the two equalities above, it follows that:
    \begin{align*}
        P_\emptyset(s_1)\prod_{t=1}^T  & P(s_{t+1} | s_t, a_t)\boldsymbol{\pi}(a_t|s_t) \\
        &= P_\emptyset(s_1^\sigma)\prod_{t=1}^T  P(s_{t+1}^\sigma | s_t^\sigma , a_t^\sigma )\boldsymbol{\pi}(a_t^\sigma|s_t^\sigma)
    \end{align*}
    We can represent the probability of a trajectory as a single transition from initial state $s_1$ to terminal state $s_T$ by marginalizing out the intermediate states, so it follows that:
    \begin{align*}
        &p^{\boldsymbol{\pi}}(s_1 \rightarrow s_T, T)\\ 
        &= \int_{s_1} \cdots \int_{s_{T-1}} P_\emptyset(s_1)\prod_{t=1}^T  P(s_{t+1} | s_t, a_t)\boldsymbol{\pi}(a_t|s_t) \\
        &= \int_{s_1^\sigma} \cdots \int_{s_{T-1}^\sigma} P_\emptyset(s_1^\sigma)\prod_{t=1}^T  P(s_{t+1}^\sigma | s_t^\sigma, a_t^\sigma)\boldsymbol{\pi}(a_t^\sigma|s_t^\sigma) \\
        &= p^{\boldsymbol{\pi}}(s_1^\sigma \rightarrow s_T^\sigma, T)
    \end{align*}
    Thus, the probability of reaching terminal state $s_T$ from initial state $s_1$ is equivalent to the probability of reaching $s_T^\sigma$ from $s_1^\sigma$.
\end{proof}

\begin{thm}
    \label{thm_sym_fair}
    Equivariant policies are exactly fair with respect to team fairness.
\end{thm}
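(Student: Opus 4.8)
The plan is to unwind the definition of exact team fairness---$I(R;Z)=0$, equivalently $R \perp Z$---and show that equivariance of the joint policy forces the reward vector to carry no information about the sensitive labels. First I would observe that the reward vector $\boldsymbol{r}$ is a measurable function of the trajectory $\tau$: in pursuit-evasion $\boldsymbol{r}$ is determined by whether and when the evader is captured along $\tau$, and more generally $r_i$ is agent $i$'s contribution, computed symmetrically across agents under the homogeneity assumption. Hence the law of $R$ is the push-forward of the law of $\tau$ (Equation~\eqref{eq_traj_prob}) under this function, and it suffices to reason about how permuting sensitive labels acts on the distribution of $\tau$.

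Next I would make the role of the sensitive variables explicit: a sensitive variable $z_i$ is, by definition, a label on agent $i$ that does not enter $x_i$, and therefore enters neither the policy (the parameters $\phi_1=\cdots=\phi_n$ are shared, so the joint policy is a symmetric function of the state alone) nor the transition function (homogeneous agents). A relabeling $\sigma$ of the sensitive assignment thus corresponds to a permutation of agent indices that, by Theorem~\ref{thm_policy_eqv_map}, permutes actions equivariantly and, by Theorem~\ref{thm_states_eqv}, satisfies $p^{\boldsymbol{\pi}}(s_1 \rightarrow s_T, T) = p^{\boldsymbol{\pi}}(s_1^\sigma \rightarrow s_T^\sigma, T)$; together with exchangeability of start states ($P_\emptyset(s_1) = P_\emptyset(s_1^\sigma)$) this yields that the law of $\tau$ is invariant under $\sigma$ up to the same permutation of agent coordinates. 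Pushing forward, the joint law of $(R,Z)$ is invariant under simultaneously permuting the components of $\boldsymbol{r}$ and the labels in $Z$; equivalently $(r_1,\dots,r_n)$ is an exchangeable random vector whose dependence on $Z$ is only through this common permutation symmetry.

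From here I would conclude independence: because any value of $z_i$ can be carried to any other by a relabeling $\sigma$ that fixes agent $i$'s (exchangeable) position distribution while only permuting labels, the conditional law of $r_i$ given $z_i$ cannot depend on the value of $z_i$; running this for every $i$ gives $\mathcal{L}(R \mid Z) = \mathcal{L}(R)$, i.e. $R \perp Z$, hence $I(R;Z) = H(R) - H(R \mid Z) = 0$ by Equation~\eqref{eq_mi_ent}.

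I expect the main obstacle to be exactly this last deduction---rigorously passing from ``the joint law of $(R,Z)$ is invariant under the diagonal action of the relabeling group'' to ``$R$ is independent of $Z$.'' This requires care about what structure the sensitive variables carry (e.g.\ that the relabeling group acts transitively enough on label configurations, or that the prior over $Z$ is itself exchangeable) and about whether independence is asserted for the full vector $R$ or per agent for each $r_i$. The contrapositive intuition---that non-shared parameters let the joint policy assign capture/support roles by agent index, making $r_i$ depend on $z_i$---is what certifies that equivariance is precisely the property being used, and I would lean on it to pin down the right hypotheses. The remaining steps (the push-forward argument and the invocations of Theorems~\ref{thm_policy_eqv_map} and~\ref{thm_states_eqv}) should be routine.
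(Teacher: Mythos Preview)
Your proposal follows essentially the same route as the paper: invoke Theorems~\ref{thm_policy_eqv_map} and~\ref{thm_states_eqv} to obtain permutation-invariance of the trajectory (hence reward) distribution, deduce $P[R=\boldsymbol{r}\mid Z=\boldsymbol{z}]=P[R=\boldsymbol{r}^\sigma\mid Z=\boldsymbol{z}^\sigma]$, and conclude $R\perp Z$, i.e.\ $I(R;Z)=0$. The only difference is that you are more careful about the final implication (invariance under the diagonal action $\Rightarrow$ independence), which the paper simply asserts; your concern about needing transitivity of the relabeling action or an exchangeable prior on $Z$ is well placed, but the paper does not address it either.
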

\begin{proof}
    The proof follows directly from Theorem \ref{thm_states_eqv}. Since $p^{\boldsymbol{\pi}}(s_1 \rightarrow s_T, T) = p^{\boldsymbol{\pi}}(s_1^\sigma \rightarrow s_T^\sigma, T)$, the probability of the agents obtaining reward $\boldsymbol{r}$ must be equal to obtaining reward $\boldsymbol{r}^\sigma$. Under the full distribution of initial states, the equality:
    \begin{equation*}
        P[R=\boldsymbol{r} | Z=\boldsymbol{z}] = P[R=\boldsymbol{r}^\sigma | Z=\boldsymbol{z}^\sigma]
    \end{equation*}
    \noindent holds for all $\boldsymbol{r}$ and assignments of sensitive variables $\boldsymbol{z}$. This is only possible if $R \perp Z$ and, therefore, $\textrm{I}(R;Z) = 0$, which meets exact team fairness.
\end{proof}

\subsection{Fairness through equivariance regularization}
Though Fair-E achieves team fairness, it does so in a rigid manner---imposing hard constraints on policy parameters. Fair-E therefore has no choice but to pursue fairness to the fullest extent (and accept the maximum utility trade-off in return). In many cases, it is advantageous to tune the strength of the fairness constraints. For this reason, we propose a soft-constraint version of Fair-E, which we call Fairness through Equivariance Regularization (Fair-ER). Fair-ER is defined by the following regularization objective:
\begin{multline}
    \label{eq_eqv_obj}
    J_{\textrm{eqv}}(\phi_1, ..., \phi_i, ..., \phi_n) \\
    = \underset{s}{\mathbb{E}}[\underset{j \neq i}{\mathbb{E}}[1 - \cos(\pi_{\phi_i}(s) - \pi_{\phi_j}(s))|_{s=s_t}]]
\end{multline}
\noindent which encourages equivariance by penalizing agents proportionally to the amount their actions differ from the actions of their teammates. Using Equation \eqref{eq_eqv_obj}, Fair-ER extends the standard RL objective from Equation \eqref{eq_ddpg_obj} as follows:
\begin{equation}
    J(\phi_i) + \lambda J_{\textrm{eqv}}(\phi_1, ..., \phi_i, ..., \phi_n)
\end{equation}
\noindent where $\lambda$ is a ``fairness control parameter" weighting the strength of equivariance. Differentiating the joint objective with respect to parameters $\phi_i$ produces the Fair-ER policy gradient:
\begin{multline}
    \label{eq_eqv_grad}
    \nabla_{\phi_i} J_{\textrm{eqv}}(\phi_i)
    = \underset{s}{\mathbb{E}}\bigg[\sum_i \frac{1}{N-1} \sum_{j \neq i} \sin(\pi_{\phi_i}(s) \\ - \pi_{\phi_j}(s)) \nabla_{\phi_i} \pi_{\phi_i}(s) |_{s=s_i}\bigg]
\end{multline}
\noindent In this work, Fair-ER is applied to each agent's actor network by optimizing Equation \eqref{eq_eqv_grad} alongside Equation \eqref{eq_ddpg_grad}. Though the above derivations consider stochastic policies, we highlight that Fair-ER is also applicable to deterministic policies and is therefore useful to any multi-agent policy gradient algorithm. We provide further background and a derivation of Equation \eqref{eq_eqv_grad} in Appendix \ref{apdx_derivations_derivations}.

\section{Experiments}
Pursuit-evasion allows us to quantify the performance of emergent team behavior (in terms of both team success and fairness) under variations of ``socio-economic" parameters such as shared objectives and agent skill-level. We therefore use the pursuit-evasion game formalized in Section \ref{sec_preliminaries} to verify our methods. In each experiment, $n{=}3$ pursuer agents are trained in a decentralized manner (each following DDPG) for a total of 125,000 episodes, during which velocity is decreased from $\lvert \vec{v}_p\rvert = 1.2$ to $\lvert \vec{v}_p\rvert = 0.4$. The evader speed is fixed at $\lvert \vec{v}_e\rvert = 1.0$. After training, we test the resulting policies at discrete velocity steps (e.g. $\lvert \vec{v}_p\rvert {=} 1.0$, $\lvert \vec{v}_p\rvert {=} 0.9$, etc), where a decrease in $\lvert \vec{v}_p\rvert$ represents a lesser skilled pursuer. We define the sensitive attribute $z_i$ for each agent $i$ to be a unique identifier of that agent (i.e. $z_i=[0,0,1], z_i=[0,1,0]$ or $z_i=[1,0,0]$ in the $n=3$ case). Each method is evaluated in terms of both utility---through traditional measures of performance such as success rate---and fairness---through the team fairness measure proposed in Section \ref{sec_team_fairness}.

Our evaluation proceeds as follows: first, we study the role of mutual reward in coordination by comparing policies trained with mutual reward to those trained with individual reward. Next, we show that naive mutual reward maximization results in high utility at the expense of fairness. We then show the efficacy of our proposed solution, Fair-E, in resolving these fairness issues. Finally, we evaluate our soft-constraint method, Fair-ER, in balancing fairness and utility.

\subsection{Importance of mutual reward}
We train pursuer policies with decentralized DDPG under conditions of either mutual or individual reward. In the mutual reward condition, pursuers share in the success of their teammates, each receiving the sum of the reward vector $\boldsymbol{r}$. In the individual reward condition, a pursuer is only rewarded if it captures the prey itself, which makes the pursuit-evasion task competitive. The results are shown in Figure \ref{fig_ind_vs_mut}, where utility is the capture success rate of the multi-agent team.

We find that pursuers trained with mutual reward significantly outperform those trained with individual reward. Mutual reward pursuers maintain their performance even as speed drops to $\lvert \vec{v}_p\rvert=0.5$; which is only half of the evader's speed. Under individual reward, performance drops off quickly for $\lvert \vec{v}_p\rvert \leq 1.0$. The velocity $\lvert \vec{v}_p\rvert=1.0$ represents a crucial turning-point in pursuit-evasion---it is the point at which a straight-line chase towards the prey no longer works. These results show that, without mutual reward, the pursuers are not properly incentivized to work together and therefore do not develop a coordination strategy that is any better than a greedy individual pursuit of the evader. Thus, we confirm that mutual reward (a single, shared objective) is vital to coordination.
\label{sec_experiments}
\begin{figure}[]
    \centering
    \makebox[\linewidth][c]{\includegraphics[width=0.9\linewidth]{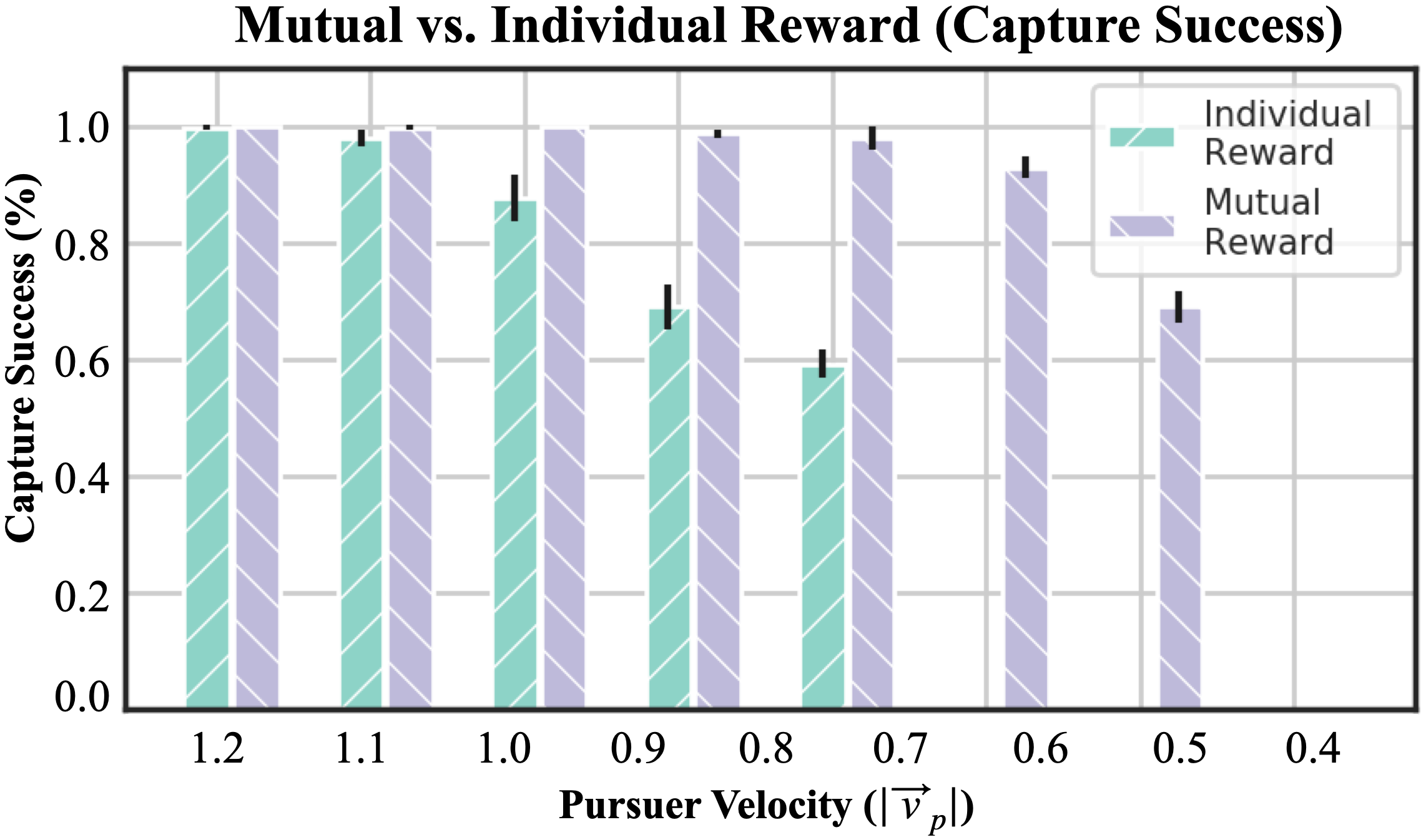}}
    \caption{Performance of policies trained with individual vs. mutual reward. As pursuer velocity decreases, the pursuit-evasion task requires more sophisticated coordination.}
    \label{fig_ind_vs_mut}
\end{figure}
\subsection{Fair outcomes with Fair-E}
\begin{figure*}[t!]
    \centering
    \includegraphics[width=0.95\textwidth]{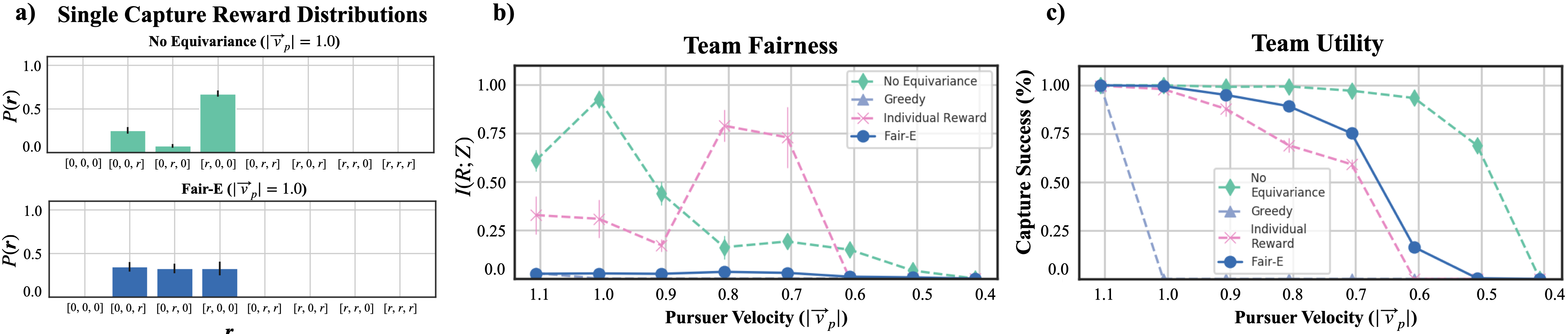}
    \caption{Quantitative comparison of performance for equivariant (i.e. Fair-E) vs. non-equivariant policies. \textbf{a)} Distribution of reward vectors for both strategies at the pursuer velocity $\lvert \vec{v}_p\rvert=1.0$. Non-equivariant policies (top), which learn strategies that push captures towards one agent, yield highly uneven reward distributions. Fair-E policies (bottom) learn to spread captures amongst teammates equally, resulting in even reward distributions. \textbf{b)} Team fairness scores for both strategies (lower better). Fair-E policies yield fairer outcomes than non-equivariant policies across all pursuer velocity (i.e. agent skill) levels. Note that the curve for the Greedy strategy, which is fair despite its low utility, is tucked behind the Fair-E curve. \textbf{c)} Team utility (i.e. capture success) achieved by both strategies (higher better). As pursuer velocity decreases, non-equivariant policies outperform Fair-E by a wide margin, indicating that a fairness-utility trade-off exists.}
    \label{fig_ind_vs_sym}
\end{figure*}
Though mutual reward incentivizes efficient team coordination, it does not stipulate \textit{how} agents should coordinate. To study the nature of the resulting strategy, we examine the distribution of reward vectors obtained by the pursuers over 100 test-time trajectories (averaged over five random seeds each). As shown in Figure \ref{fig_ind_vs_sym}a (top), in which we plot reward vector assignments for captures involving only one pursuer, the pursuer team discovers an unfair strategy---the majority of captures are accounted for by a single agent.

The emergence of an unfair strategy reflects the difficulty of the pursuit-evasion setting. As $\lvert \vec{v}_p\rvert$ decreases, the whole pursuer team has to learn to work together to capture the evader, which is a challenging coordination task. The pursuers learn to do this effectively by assigning roles — e.g. in the $n{=}3$ case, two pursuers take supporting roles, shepherding the evader towards the third agent, who is designated the “capturer”. We note that the decision of which agent becomes the capturer is an emergent phenomenon of the system. As $\lvert \vec{v}_p\rvert$ decreases further, such role assignment is not only helpful but necessary for success. Altogether, the results suggest that unconstrained mutual reward maximization prioritizes utility over fairness.

Our proposed solution, Fair-E, directly combats these fairness issues. Figure \ref{fig_ind_vs_sym}a (bottom) shows the distribution of reward vectors obtained by agents trained with Fair-E. Due to equivariance, Fair-E yields much more evenly distributed rewards. To further quantify these gains, we compare team fairness for both strategies over a variety of skill levels (i.e. $\lvert \vec{v}_p\rvert$ values). The results, shown in Figure \ref{fig_ind_vs_sym}b, confirm that Fair-E achieves much lower $I(R;Z)$ and, therefore, higher team-fairness. Note that, when $\lvert \vec{v}_p\rvert < 0.9$, $I(R;Z)$ is low for non-equivariant pursuers as well. This is an artifact of team fairness---as $\lvert \vec{v}_p\rvert$ decreases, capture success inevitably decreases as well, which is technically a fairer, albeit less desirable, outcome (all agents share equitably in failure). Nevertheless, Figure \ref{fig_ind_vs_sym}b serves as empirical evidence to backup our theoretical result from Section \ref{sec_fairness_equivariance} that Fair-E meets the demands of team fairness.

Despite achieving fairer outcomes, Fair-E is subject to drops in utility as $\lvert \vec{v}_p\rvert$ decreases (see Figure \ref{fig_ind_vs_sym}c). The utility curve for Fair-E drops precipitously for agent skill $\lvert \vec{v}_p\rvert < 0.9$; much faster than the drop-off for pursuers with no equivariance. This is because Fair-E directly prevents role assignment. By hard-constraining each agent's policy, Fair-E enforces $\pi_i(s_t) {=} \pi_j(s_t)$, whereas role assignment requires $\pi_i(s_t) {\neq} \pi_j(s_t)$ for $i {\neq} j$. We emphasize role assignment as key to this result, as parameter-sharing has been shown to be helpful in problem domains that do not require explicit role assignment \cite{baker2019emergent}. In the context of fairness, however, these results indicate that Fair-E will always elect to give up utility to preserve fairness.

For completeness, we also show results for the policies learned with individual reward (from Figure \ref{fig_ind_vs_mut}) and a hand-crafted greedy control strategy in which each pursuer runs directly towards the evader. Note that greedy policies are equivariant---by definition, agents will select similar actions in similar states---but demonstrate no coordination. For this reason, greedy policies have high fairness, but very low utility. Utility follows the same pattern for individual reward policies. Interestingly though, individual reward policies become less fair between $\lvert \vec{v}_p\rvert=0.9$ and $\lvert \vec{v}_p\rvert=0.6$, before tapering off as performance decreases. We defer further discussion of this finding, as well as details regarding the computation of the team fairness score, $I(R;Z)$, and the hand-crafted greedy control baseline to Appendix \ref{apdx_experimental_details}.

\subsection{Modulating fairness with Fair-ER}
Unlike Fair-E, Fair-ER allows policies to balance fairness and utility dynamically. Intuitively, this is because Fair-ER incentivizes policy equivariance through the regularization objective from Equation \eqref{eq_eqv_obj}, while still allowing agents to update their own individual policy parameters (unlike Fair-E). Therefore, the value of the fairness control weight $\lambda$ will dictate how much each agent values fairness vs. utility. 

To study the effectiveness of this method, we trained Fair-ER agents in increasingly difficult environments (by decreasing pursuer velocity $\lvert \vec{v}_p\rvert$) while modulating the fairness control parameter $\lambda$. The effect of $\lambda$ on policy training is shown in Figure \ref{fig_fairer_training}. The results show that, for $\lambda {\leq} 0.5$, Fair-ER is successful in bridging the performance gap between non-equivariant policies ($\lambda{=}0.0)$ and Fair-E (black line). Importantly, we show that it is also possible to over-constrain the system so that it actually performs worse than Fair-E (e.g. $\lambda{=}1.0$). This indicates that, though Fair-ER can mitigate the drops in performance described above, the regularization parameter $\lambda$ must be tuned appropriately.

We also performed the same test-time analysis as described for Fair-E in the previous subsection. Figure \ref{fig_fairness_utility} shows the effect of $\lambda$ on both fairness ($I(R;Z)$) and utility (capture success). For each skill level, increasing $\lambda$ allows Fair-ER to fine-tune the balance between fair and unfair policies, achieving the highest utility possible under its given constraints. We find that, with high values of $\lambda$ (e.g. $\lambda=0.9$), Fair-ER prioritizes fairness over utility and performs in-line with (or worse than) Fair-E---achieving fair outcomes, even at the expense of utility. When $\lambda$ is in the range $\lambda=0.5$ to $\lambda=0.1$, Fair-ER withstands a drop in utility until $\lvert \vec{v}_p\rvert=0.7$ by giving up small amounts of fairness. Therefore, we find evidence that learning multi-agent coordination strategies with Fair-ER simultaneously maintains \textit{higher utility} than Fair-E while achieving \textit{higher fairness} than non-equivariant learning. Overall, tuning the fairness weight $\lambda$ allows us to directly control the strength of the fairness constraints imposed on the system, enabling Fair-ER to modulate fairness to the needs of the task.

\subsection{Fairness-utility trade-off}
As we saw in Figure \ref{fig_ind_vs_sym}, the hard constraints that Fair-E places on each agent's policy creates an inherent commitment to achieving fair outcomes at the expense of utility (or reward). In this section, we examine the extent to which the fairness-utility trade-off exists for Fair-ER across all agent skill levels. For each agent skill level (i.e. $\lvert \vec{v}_p\rvert$ value), we computed both the fairness (through team fairness $I(R;Z)$) and utility (through capture success) scores achieved by multi-agent coordination strategies learned for $\lambda$ values in the range $\lambda \in \{0.0, 1.0\}$ over 100 test-time trajectories (averaged over five random seeds each). The results of this experiment are shown in Figure \ref{fig_fairness_utility}. 

Unlike many prior studies in both traditional multi-agent fairness settings \cite{okun2015equality, le1990equity, bertsimas2012efficiency, joe2013multiresource, bertsimas2011price} and prediction-based settings \cite{corbett2017algorithmic, zhao2019inherent}, we find that it is not always the case that fairness must be traded for utility. With Fair-ER, fairness comes for little to no cost in utility until $\lvert \vec{v}_p\rvert=0.8$. This means that, when each agent operates at a high skill level, requiring each agent in the multi-agent team to shift towards an equivariant policy (which yields fair results) does not cause coordination of the larger multi-agent team to break down. When $\lvert \vec{v}_p\rvert<0.8$, however, utility drops quickly for larger values of $\lambda$. This indicates that, when agent skill decreases (or the task becomes more complex relative the agents' current skill level), unfair strategies such as role assignment are the only effective way to maintain high levels of utility. Overall, these results serve as empirical evidence that, in the context of cooperative multi-agent tasks, fairness is inexpensive, so long as the task is easy enough (i.e. agent skill is high enough). As task difficulty increases, fairness comes at an increasingly steep cost. To the best of our knowledge, such a characterization of the fairness-utility trade-off for multi-agent settings has not been illustrated in the fairness literature.
\begin{figure}[t!]
    \centering
    \makebox[\linewidth][c]{\includegraphics[width=0.95\linewidth]{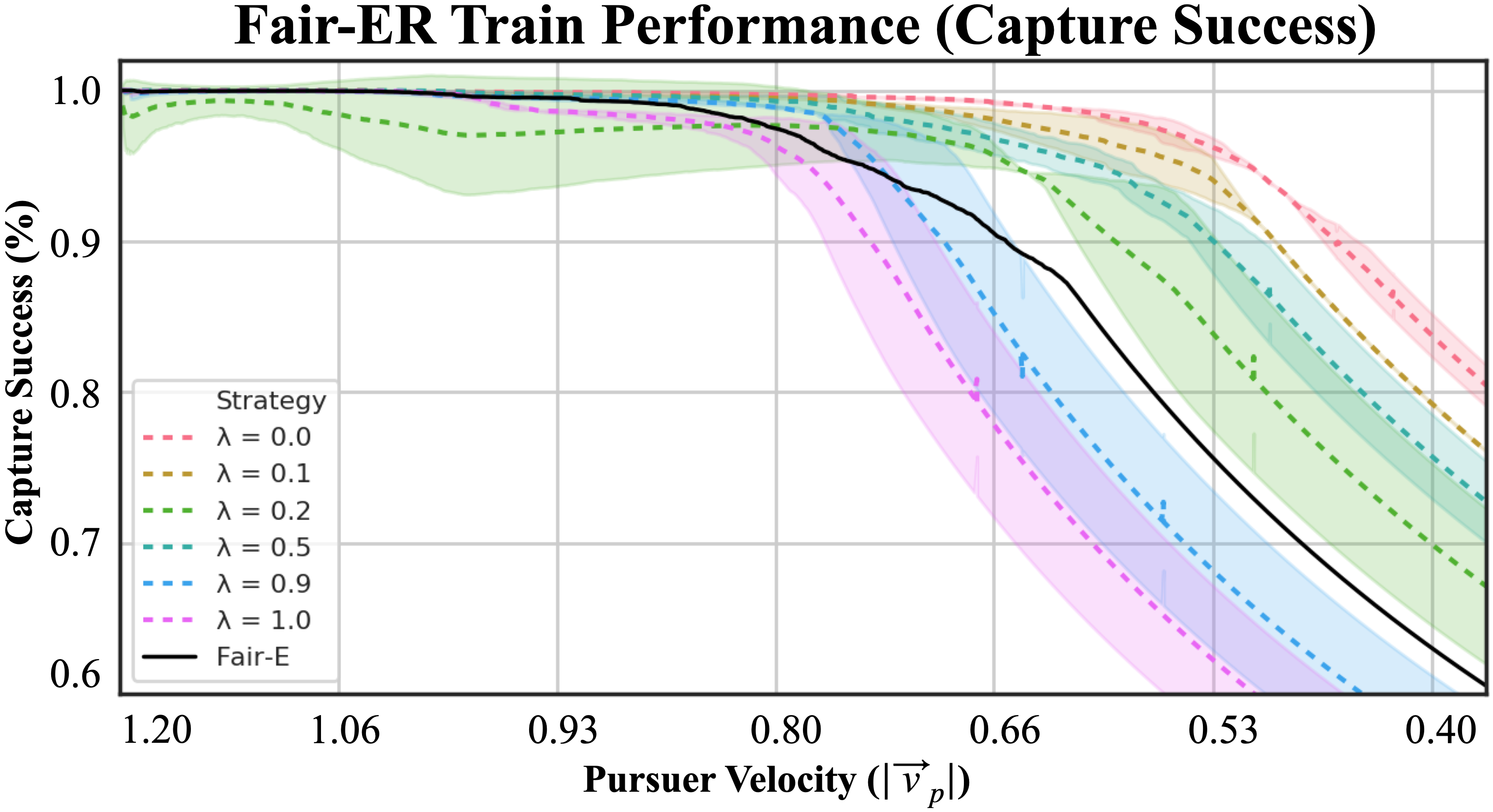}}
    \caption{Effect of the equivariance control parameter $\lambda$ on policy learning with Fair-ER. Increasing $\lambda$ yields fairer policies, but causes performance to decay more quickly in difficult environments (lower agent skill). The black line represents Fair-E's performance for comparison.}
    \label{fig_fairer_training}
\end{figure}

\section{Conclusion}
\label{sec_conclusion}
Multi-agent learning holds promise for helping AI researchers, economic theorists, and policymakers alike better evaluate real-world problems involving social structures, taxation, policy, and economic systems broadly. This work has focused on one such problem; namely, fairness in cooperative multi-agent settings. In particular, we have demonstrated that fairness issues arise naturally in cooperative, single-objective multi-agent learning problems. We have shown that our proposed method, equivariant policy optimization (Fair-E), mitigates such issues. We have also shown that soft constraints (Fair-ER) lower the cost of fairness and allow the fairness-utility trade-off to be balanced dynamically. Moreover, we have presented novel results regarding the fairness-utility trade-off for cooperative multi-agent settings; identifying a connection between agent skill and fairness. In particular, we showed that fairness comes for free when agents are highly-skilled, but becomes increasingly expensive for lesser-skilled agents.

This work represents a first step towards understanding the core factors underlying fairness and multi-agent learning in environments where team dynamics and coordination are important for task success. There are a number of exciting avenues of future work that build upon these initial ideas. First, ongoing work is investigating cooperative multi-agent fairness in more complex domains (e.g. video games, simulated economic societies). Moreover, there is room to explore indirect or backdoor causal paths between sensitive and target variables in the context of multi-agent teams, which warrant connecting additional interpretations of fairness (e.g. causal fairness) to cooperative multi-agent settings.
\begin{figure}[t!]
    \centering
    \makebox[\linewidth][c]{\includegraphics[width=0.99\linewidth]{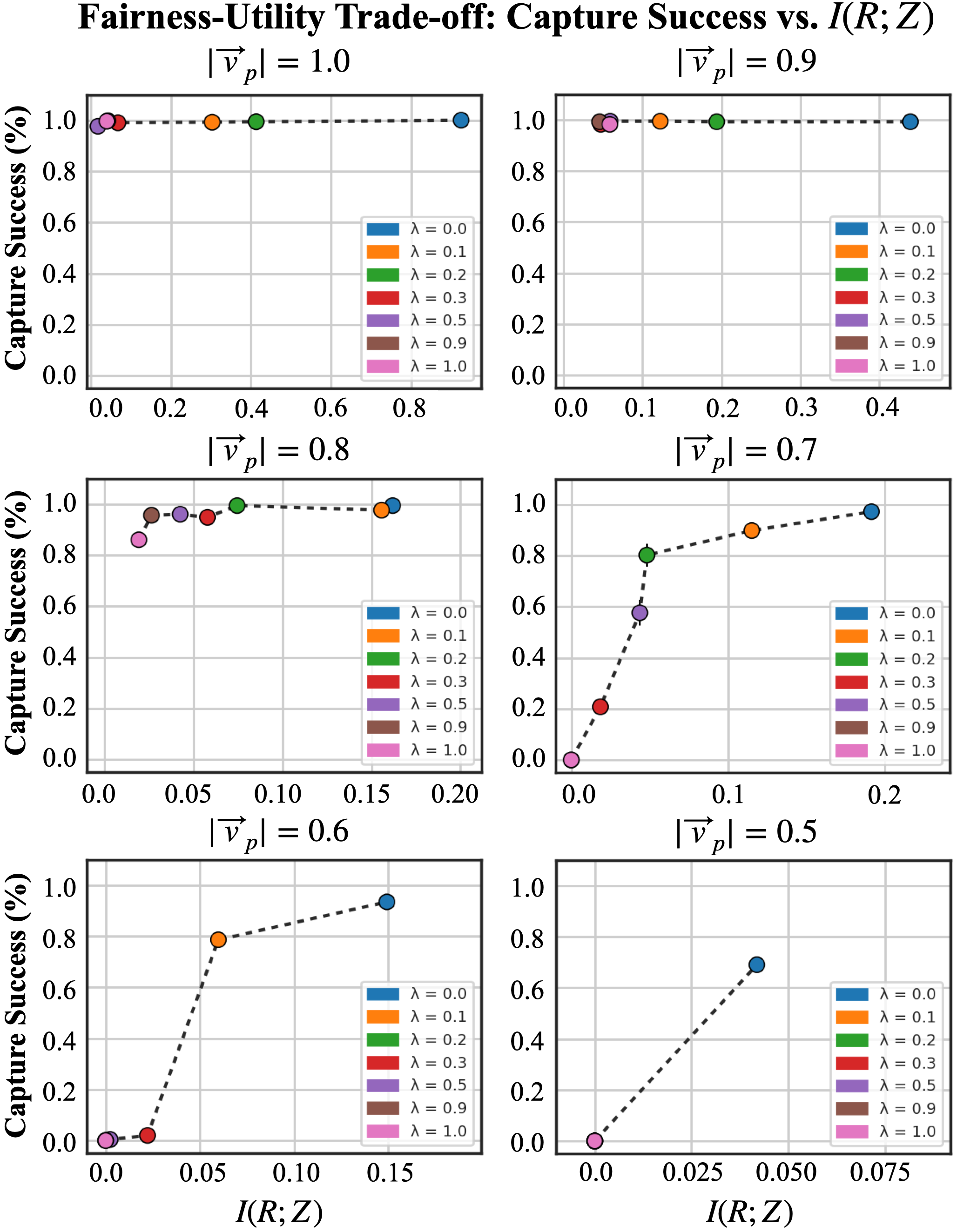}}
    \caption{Fairness vs. utility comparisons for Fair-ER trained with various values of equivariance regularization $\lambda$ and environment difficulties. Note that $\lambda=0.0$ is equivalent to no equivariance.}
    \label{fig_fairness_utility}
\end{figure}

\section*{Acknowledgments}
We thank the reviewers for their valuable feedback. This research was supported by NSF awards CCF-1522054 (Expeditions in computing), AFOSR Multidisciplinary University Research Initiatives (MURI) Program FA9550-18-1-0136, AFOSR FA9550-17-1-0292, AFOSR 87727, ARO award W911NF-17-1-0187 for our compute cluster, and an Open Philanthropy award to the Center for Human-Compatible AI.

\bibliography{aaai22}

\clearpage
\appendix

\section{Deterministic fairness through equivariance}
\label{apdx_deterministic_proofs}
We review the proofs from Section \ref{sec_fairness_equivariance} in the context of deterministic policies. We assume that agents are homogeneous in their non-sensitive variables. For $n$ agents, each with an individual policy $\mu_{\phi_i}$, let $\boldsymbol{\mu} = \{\mu_{\phi_1}, ..., \mu_{\phi_n} \}$ be the joint policy representing the team. Recall that, in the symmetric case, $\phi_1=\phi_2=...=\phi_n$ and $\omega_1=\omega_2=...=\omega_n$.

\begin{thm}
    \label{thm_deterministic_policy_eqv_map}
    If individual policies $\mu_{\phi_i}$ are symmetric, then the joint policy $\boldsymbol{\mu} = \{\mu_{\phi_1}, ..., \mu_{\phi_n} \}$ is an equivariant map.
\end{thm}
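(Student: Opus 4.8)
The plan is to mirror the argument of Theorem~\ref{thm_policy_eqv_map} verbatim, replacing the stochastic joint policy $\boldsymbol{\pi}$ with the deterministic joint policy $\boldsymbol{\mu}$ and observing that nothing in that argument actually used stochasticity. Concretely, I would let $\sigma$ be an arbitrary permutation operator acting on vectors (states $s_t$ and joint actions $\boldsymbol{a}_t = \boldsymbol{\mu}(s_t)$), so that $\sigma \cdot s_t = s_t^\sigma$ and $\sigma \cdot \boldsymbol{a}_t = \boldsymbol{a}_t^\sigma$. The single substantive ingredient is parameter symmetry: under the symmetric assumption $\phi_1 = \phi_2 = \cdots = \phi_n$, every agent applies the same function $\mu_\phi(\cdot)$ to its own observation, so permuting which agent occupies which ``slot'' in the state vector permutes the corresponding output actions in exactly the same way.

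The key step is therefore the chain of equalities
\begin{equation*}
    \boldsymbol{\mu}(\sigma \cdot s)
    = \boldsymbol{\mu}(s^{\sigma})
    = \boldsymbol{a}^\sigma
    = \sigma \cdot \boldsymbol{a}
    = \sigma \cdot \boldsymbol{\mu}(s),
\end{equation*}
where the first equality is the definition of $\sigma \cdot s$, the second applies the symmetric individual policies componentwise, the third rewrites the permuted action vector as $\sigma$ acting on $\boldsymbol{a} = \boldsymbol{\mu}(s)$, and the fourth is again just notation. The resulting commutation relation $\boldsymbol{\mu}(\sigma \cdot s) = \sigma \cdot \boldsymbol{\mu}(s)$ is exactly the definition of an equivariant map (with $G$ the relevant permutation group and $\sigma$ its generating symmetries), which is what we must show. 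I would also note, as in the stochastic case, that this relation is what downstream results (the deterministic analogues of Theorems~\ref{thm_states_eqv} and \ref{thm_sym_fair}) rely on, and that it fails without parameter symmetry.

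I do not expect a genuine obstacle here: the proof is essentially a relabeling argument and the only thing to be careful about is making precise what it means for $\sigma$ to ``act componentwise'' on both the state slots and the action slots consistently — i.e. that the permutation of agents induces the same index permutation on the observation components fed to each policy and on the action components they return. The mild technical point to state cleanly is that the homogeneity/exchangeability assumptions guarantee this consistency; once that is granted, the four-step equality chain above is immediate and the theorem follows.
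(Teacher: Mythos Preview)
Your proposal is correct and essentially identical to the paper's own proof: the paper also invokes parameter symmetry $\phi_1=\cdots=\phi_n$, writes exactly the same four-term chain $\boldsymbol{\mu}(\sigma\cdot s)=\boldsymbol{\mu}(s^\sigma)=\boldsymbol{a}^\sigma=\sigma\cdot\boldsymbol{a}=\sigma\cdot\boldsymbol{\mu}(s)$, and concludes that this commutation relation makes $\boldsymbol{\mu}$ an equivariant map. There is no meaningful difference in approach or level of detail.
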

\begin{proof}
    Let $\sigma$ be a permutation operator that, when applied to a vector (such as a state $s_t$ or joint action $\boldsymbol{a}_t$), produces the permuted vector ($\sigma \cdot s_t {=} s^\sigma_t$ or $\sigma \cdot \boldsymbol{a}_t {=} \boldsymbol{a}^\sigma_t$, respectively). Under parameter symmetry (i.e. $\phi_1${=}$\phi_2{=}{\cdots}{=}\phi_n$), we have:
    \begin{equation*}
        \boldsymbol{\mu}(\sigma \cdot s) = \boldsymbol{\mu}(s^{\sigma}) \\
        = \boldsymbol{a}^\sigma \\
        = \sigma \cdot \boldsymbol{a} \\
        = \sigma \cdot \boldsymbol{\mu}(s)
    \end{equation*}
    \noindent where the commutative relationship $\boldsymbol{\mu}(\sigma \cdot s) = \sigma \cdot \boldsymbol{\mu}(s)$ implies that $\boldsymbol{\mu}$ is an equivariant map.
\end{proof}

\begin{thm}
    \label{thm_deterministic_states_eqv}
    Let $p^{\boldsymbol{\mu}}(s \rightarrow s', k)$ be the probability of transitioning from $s$ to $s'$ in $k$ steps.
    Given that the joint policy $\boldsymbol{\mu}$ is an equivariant map, it follows that $p^{\boldsymbol{\mu}}(s_1 \rightarrow s_T, T) = p^{\boldsymbol{\mu}}(s_1^\sigma \rightarrow s_T^\sigma, T)$.
\end{thm}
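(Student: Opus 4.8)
The plan is to transcribe the proof of Theorem~\ref{thm_states_eqv} to the deterministic setting, replacing the stochastic joint policy $\boldsymbol{\pi}$ by $\boldsymbol{\mu}$ and tracking the deterministic action-selection term through each permutation. First I would invoke Theorem~\ref{thm_deterministic_policy_eqv_map} for the commutative relation $\boldsymbol{\mu}(\sigma \cdot s) = \sigma \cdot \boldsymbol{\mu}(s)$, so that permuting a state permutes the jointly selected action identically. Next I would record the two structural equalities inherited from the modeling assumptions: agent homogeneity makes the environment dynamics permutation-invariant, $P(s_{t+1}\mid s_t,\boldsymbol{a}_t) = P(s_{t+1}^\sigma \mid s_t^\sigma,\boldsymbol{a}_t^\sigma)$, and the (exchangeable) start-state distribution satisfies $P_\emptyset(s_1) = P_\emptyset(s_1^\sigma)$.

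Second, I would specialize the trajectory probability of Equation~\eqref{eq_traj_prob} to deterministic policies, where the factor $\pi(a_t\mid s_t)$ collapses to the indicator $\mathbf{1}\{a_t = \boldsymbol{\mu}(s_t)\}$, giving $P(\tau) = P_\emptyset(s_1)\prod_{t=1}^{T} P(s_{t+1}\mid s_t,\boldsymbol{\mu}(s_t))$ on its support. Using $\boldsymbol{\mu}(\sigma\cdot s_t) = \sigma\cdot\boldsymbol{\mu}(s_t)$ together with permutation-invariance of $P$, I would show termwise that $P(s_{t+1}\mid s_t,\boldsymbol{\mu}(s_t)) = P(s_{t+1}^\sigma\mid s_t^\sigma,\boldsymbol{\mu}(s_t^\sigma))$, so that the whole product together with the start-state factor is unchanged under the substitution $s_t \mapsto s_t^\sigma$ for every $t$.

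Third, I would obtain $p^{\boldsymbol{\mu}}(s_1\to s_T,T)$ by marginalizing the intermediate states $s_2,\dots,s_{T-1}$ out of this deterministic trajectory density and then perform the change of variables $s_t \mapsto s_t^\sigma$ in each integral. Since $\sigma$ merely permutes coordinates of state space it is a measure-preserving bijection, so the integral over $s_t$ equals the integral over $s_t^\sigma$; combined with the termwise equality above this yields $p^{\boldsymbol{\mu}}(s_1\to s_T,T) = p^{\boldsymbol{\mu}}(s_1^\sigma\to s_T^\sigma,T)$, as claimed.

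The main obstacle is the interaction between the deterministic policy and the marginalization: because a deterministic policy concentrates the trajectory ``density'' on a lower-dimensional set, the change-of-variables step must be justified with some care --- either by arguing that $\sigma$ is a measure-preserving bijection for whatever (possibly singular) measure the deterministic dynamics induce on trajectories, or by carrying the indicator $\mathbf{1}\{a_t = \boldsymbol{\mu}(s_t)\}$ through the permutation explicitly and checking $\mathbf{1}\{\boldsymbol{a}_t^\sigma = \boldsymbol{\mu}(s_t^\sigma)\} = \mathbf{1}\{\boldsymbol{a}_t = \boldsymbol{\mu}(s_t)\}$ via Theorem~\ref{thm_deterministic_policy_eqv_map}. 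Once that bookkeeping is settled, the remainder is a routine transcription of the stochastic argument.
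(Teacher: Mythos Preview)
Your proposal is correct and follows essentially the same route as the paper's own proof: invoke the equivariance of $\boldsymbol{\mu}$ from Theorem~\ref{thm_deterministic_policy_eqv_map}, record the permutation-invariance of the transition kernel and the start-state distribution, establish termwise equality of the trajectory-probability product under $s_t\mapsto s_t^\sigma$, and then marginalize out the intermediate states. The only notable difference is that the paper expands $p^{\boldsymbol{\mu}}(s_1\to s_T,T)$ via the recursive identities for $p^{\boldsymbol{\mu}}(s\to s',k)$ before integrating, whereas you go directly to the marginalization with a change of variables; and you are more explicit than the paper about the measure-theoretic bookkeeping needed when the policy is deterministic (the indicator $\mathbf{1}\{a_t=\boldsymbol{\mu}(s_t)\}$ and the singular trajectory ``density''), which the paper simply glosses over.
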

\begin{proof}
    It follows from agent homogeneity that permuting a state $\sigma \cdot s_t$, which in turn permutes action selection $\sigma \cdot \boldsymbol{a}_t$ (from Theorem \ref{thm_deterministic_policy_eqv_map}), also permutes the environment's transition probabilities:
    \begin{equation*}
        P(s_{t+1}|s_t, a_t) = P(s_{t+1}^\sigma|s_t^\sigma, \boldsymbol{a}_t^\sigma)
    \end{equation*}
    This is because, from the environment's perspective, a state-action pair is indistinguishable from the state-action pair generated by the same agents after swapping their positions and selected actions. Assuming that the full distribution of start-states $P_\emptyset$ is uniform, we also have $P_\emptyset(s_1) = P_\emptyset(s_1^\sigma)$.
     Recall the probability of a trajectory from Equation \eqref{eq_traj_prob}. Given the equivariant function $\boldsymbol{\mu}$ and the two equalities above, it follows that:
    \begin{align*}
        P_\emptyset(s_1)\prod_{t=1}^T & P(s_{t+1} | s_t, \boldsymbol{a}_t)|_{\boldsymbol{a}_t = \boldsymbol{\mu}(s_t)} \\
        &= P_\emptyset(s_1^\sigma)\prod_{t=1}^T  P(s_{t+1}^\sigma | s_t^\sigma , \boldsymbol{a}_t^\sigma )|_{\boldsymbol{a}_t^\sigma  = \boldsymbol{\mu}(s_t^\sigma )}
    \end{align*}
    Note that the following properties hold for $p^{\boldsymbol{\mu}}(s \rightarrow s', k)$:
    \begin{itemize}
        \item $p^{\boldsymbol{\mu}}(s \rightarrow s, 0) = 1$
        \item $p^{\boldsymbol{\mu}}(s \rightarrow s', 1) = p(s_{t+1}=s' | s_t=s, a_t) | a_t=\mu(s_t)$,
        \item $p^{\boldsymbol{\mu}}(s \rightarrow x, T) = \int_{s'} p^\mu(s \rightarrow s', T-1) p^\mu(s' \rightarrow x, 1)$
    \end{itemize}
    We can therefore represent the probability of a trajectory as a single transition from initial state $s_1$ to terminal state $s_T$ by marginalizing out the intermediate states and show that:
    \begin{align*}
        &p^\mu(s_1 \rightarrow s_T, T) \\
        &= \int_{s_{T-1}} p^\mu(s_1 \rightarrow s_{T-1}, T-1) p^\mu(s_{T-1} \rightarrow s_T, 1) \\
        &\begin{multlined} = \int_{s_{T-2}} p^\mu(s_1 \rightarrow s_{T-2}, T-2) \big[p^\mu(s_{T-2} \rightarrow s_{T-1}, 1) \\ p^\mu(s_{T-1} \rightarrow s_T, 1)\big] \end{multlined}\\
        &= \cdots \\
        &= \int_{s_1} \cdots \int_{s_{T-1}} P_\emptyset(s_1)\prod_{t=1}^T  P(s_{t+1} | s_t, a_t)|_{a_t = \mu(s_t)} \\
        &= \int_{s_1^\sigma} \cdots \int_{s_{T-1}^\sigma} P_\emptyset(s_1^\sigma)\prod_{t=1}^T  P(s_{t+1}^\sigma | s_t^\sigma, a_t^\sigma)|_{a_t^\sigma = \mu(s_t^\sigma)} \\
        &= \cdots \\
        &= p^\mu(s_1^\sigma \rightarrow s_T^\sigma, T)
    \end{align*}
    Thus, the probability of reaching terminal state $s_T$ from initial state $s_1$ is equivalent to the probability of reaching $s_T^\sigma$ from $s_1^\sigma$.
\end{proof}

\begin{thm}
    \label{thm_deterministic_sym_fair}
    Equivariant deterministic policies are exactly fair.
\end{thm}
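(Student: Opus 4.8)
The plan is to mirror the proof of Theorem~\ref{thm_sym_fair}, substituting the deterministic transition result of Theorem~\ref{thm_deterministic_states_eqv} for its stochastic analogue. First I would invoke Theorem~\ref{thm_deterministic_states_eqv} to obtain $p^{\boldsymbol{\mu}}(s_1 \rightarrow s_T, T) = p^{\boldsymbol{\mu}}(s_1^\sigma \rightarrow s_T^\sigma, T)$ for every permutation operator $\sigma$. In fact the proof of that theorem already shows the stronger statement that the trajectory measure itself is $\sigma$-invariant (it equates the product $P_\emptyset(s_1)\prod_t P(s_{t+1}\mid s_t,\boldsymbol{a}_t)|_{\boldsymbol{a}_t=\boldsymbol{\mu}(s_t)}$ with its $\sigma$-relabelled counterpart), and it is this stronger fact that I actually use below.

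Next I would note that the reward vector $\boldsymbol{r}$ accumulated along a trajectory $\tau$ is a permutation-equivariant functional of $\tau$: relabelling the agents by $\sigma$ relabels the components of $\boldsymbol{r}$ by the same $\sigma$. Composing this with the $\sigma$-invariance of the trajectory measure yields $P[R=\boldsymbol{r}] = P[R=\boldsymbol{r}^\sigma]$ and, conditioning on the sensitive attributes $Z$ (which, under the homogeneity assumption, are nothing but identity labels transported by the same $\sigma$), the equality $P[R=\boldsymbol{r}\mid Z=\boldsymbol{z}] = P[R=\boldsymbol{r}^\sigma \mid Z=\boldsymbol{z}^\sigma]$ holds for all $\boldsymbol{r}$, all assignments $\boldsymbol{z}$, and all permutations $\sigma$ --- exactly the relation appearing in the proof of Theorem~\ref{thm_sym_fair}, now derived for deterministic $\boldsymbol{\mu}$. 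From here the conclusion is immediate: this family of equalities, together with the exchangeability hypothesis on $P_\emptyset$, forces the conditional law of $R$ given $Z$ to be independent of the value of $Z$, i.e.\ $R \perp Z$, which by Equation~\eqref{eq_mi_ent} (equivalently by the KL form in Equation~\eqref{eq_dkl}) is equivalent to $\mathrm{I}(R;Z)=0$, meeting exact team fairness in the sense of Definition~\ref{def_exact_fairness}.

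The main obstacle is the final step: passing from "the joint law of $(R,Z)$ is invariant under simultaneously permuting agent labels in $R$ and in $Z$" to "$R$ and $Z$ are independent." This is where homogeneity and exchangeability do the real work --- without them a permutation of $Z$ need not come paired with the matching permutation of the dynamics, and the symmetry argument would not close. I would make this rigorous by observing that, since $Z$ consists solely of agent identifiers and the agents are otherwise identical, every realizable value $\boldsymbol{z}$ is a permutation of every other; hence the conditionals $P[R=\cdot\mid Z=\boldsymbol{z}]$ are all permutation-copies of one another, and averaging over $Z$ (using exchangeability of $P_\emptyset$) shows each equals the marginal $P[R=\cdot]$, giving independence. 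A secondary, purely technical point is that the reward space may be continuous, so the events "$R=\boldsymbol{r}$" should be read through the induced pushforward measures rather than pointwise densities; this is handled exactly as in Theorem~\ref{thm_sym_fair} and requires no new idea.
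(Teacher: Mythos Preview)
Your proposal is correct and matches the paper's approach exactly: the paper's proof of Theorem~\ref{thm_deterministic_sym_fair} consists of the single line ``Exact same as Theorem~\ref{thm_sym_fair},'' and you have simply spelled out that deferral in full, invoking Theorem~\ref{thm_deterministic_states_eqv} in place of Theorem~\ref{thm_states_eqv}. If anything, your version is more careful than the original in making explicit how exchangeability and homogeneity close the gap from permutation-symmetry of the joint law to independence.
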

\begin{proof}
    Exact same as Theorem \ref{thm_sym_fair}.
\end{proof}

\section{Derivations}
\label{apdx_derivations_derivations}
\subsection{Gradient of fairness objective}
\label{apdx_derivations_sym}
We present a derivation of the equivariant objective gradient from Equation \eqref{eq_eqv_grad}:
\begin{align*}
    \frac{\partial J_{\textrm{eqv}}}{\partial \phi_i} &\approx \frac{\partial}{\partial \phi_i} \frac{1}{M} \sum_i^M \frac{1}{N-1} \sum_{j \neq i}^{N-1}1 - \cos(\mu_{\phi_i}(s) - \mu_{\phi_j}(s))
    \\[4pt]
    &\begin{multlined} =\frac{1}{M} \sum_i^M \frac{1}{N-1} \sum_{j \neq i}^{N-1} - (-\sin(\mu_{\phi_i}(s) \\ \qquad \qquad - \mu_{\phi_j}(s))) \frac{\partial}{\partial \phi_i} (\mu_{\phi_i}(s) - \mu_{\phi_j}(s)) \end{multlined}
    \\[4pt]
    &\begin{multlined} =\frac{1}{M} \sum_i^M \frac{1}{N-1} \sum_{j \neq i}^{N-1} \sin(\mu_{\phi_i}(s) \\ \qquad \qquad - \mu_{\phi_j}(s)) \bigg(\frac{\partial}{\partial \phi_i} \mu_{\phi_i}(s) - \frac{\partial}{\partial \phi_i} \mu_{\phi_j}(s) \bigg)
    \end{multlined}
    \\[4pt]
    &\begin{multlined} = \frac{1}{M} \sum_i^M \frac{1}{N-1} \sum_{j \neq i}^{N-1} \sin(\mu_{\phi_i}(s) \\ \qquad \qquad - \mu_{\phi_j}(s)) \nabla_{\phi_i} \mu_{\phi_i}(s)
    \end{multlined}
\end{align*}

\subsection{Why not mean squared error?}
\label{apdx_mse}
We could just have as easily defined $J_{\textrm{eqv}}$ as the mean-squared error (MSE) between $a_i$ and $a_j$, with the objective:
\begin{equation*}
    J_{\textrm{sym}}(\phi_1, ..., \phi_i, ..., \phi_n) = \mathbb{E}_s[ \mathbb{E}_{j \neq i}[(\mu_{\phi_i}(s) - \mu_{\phi_j}(s))^2]]
\end{equation*}
and corresponding gradient:
\begin{multline*}
    \nabla_{\phi_i} J_{\textrm{sym}}(\phi) \approx \frac{1}{M} \sum_i \frac{1}{N-1} \sum_{j \neq i} (\mu_{\phi_i}(s) \\ - \mu_{\phi_j}(s)) \nabla_{\phi_i}\mu_{\phi_i}(s) |_{s=s_i}
\end{multline*}
\noindent However, recall that each $a$ is a heading angle sampled from $A$, which is a circular variable in the range $[0, 2\pi]$ or $[-\pi, \pi]$. Any difference of actions, as in MSE loss, must account for this by applying a modulo operation on top of the difference. Our objective represents angular distance as a single Fourier mode, which yields a convex optimization surface without discontinuity issues on the range of possible headings.

\section{Experimental details}
\label{apdx_experimental_details}

\paragraph{Computing team fairness}
We illustrate how team fairness scores $I(R;Z)$ are computed. Recall that each agent's sensitive attribute $z_i$ is an identity variable that uniquely identifies it from the group (see Section \ref{sec_experiments}). Also note that the vectorial reward vector $\boldsymbol{r}$ serves as a proxy for agent identity---e.g. $\boldsymbol{r} {=} [0,0,1]$ indicates that pursuer $p_3$ captured the evader. We can therefore compute team fairness as the mutual information obtained by the difference of entropies $h(R) - h(R_\textrm{uniform})$, where $R$ is a distribution over team rewards $\boldsymbol{r}$ and $R_\textrm{uniform}$ is a uniform reward distribution (i.e. captures are spread evenly across all pursuers). Computed this way, $I(R;Z)$ represents the extent to which knowing the outcome of the pursuit-evasion task reveals the agents identity, and vice versa. It therefore measures how fairly outcomes are distributed across cooperative teammates.

\paragraph{Greedy control baseline}
Let $q_i$ be the position of an agent $i$ and $U_{\textrm{att}}(q_i)$ be a quadratic function of distance between $q_i$ and a target $q_{\textrm{goal}}$:
\begin{equation}
    \label{eq_attractive_potential}
    U_{\textrm{att}}(q_i, q_{\textrm{goal}}) = \frac{1}{2}k_{\textrm{att}} \, d(q_i, q_{\textrm{goal}})^2
\end{equation}
where $k_{\textrm{att}}$ is an attraction coefficient and $d(,)$ is a measure of distance. Taking the negative gradient $F(q_i) = -\nabla U(q_i)$ yields the following control law for agent $i$'s motion:
\begin{equation}
    F_{\textrm{att}} = -\nabla U_{\textrm{att}}(q_i, q_{\textrm{goal}}) = -k_{\textrm{att}}(q_i - q_{\textrm{goal}})
\end{equation}
\noindent In this work, the environment's action-space is defined in terms of agent headings, so only the \textit{direction} of this force impacts the agents. Setting $q_{\textrm{goal}}$ to be the position of the evader and following Equation \eqref{eq_attractive_potential} at each time-step results in a greedy policy that runs directly towards the evader.

\paragraph{Fairness for individual reward policies}
In Figure \ref{fig_ind_vs_sym}b, we find that policies learned with individual reward exhibit a temporary spike in $I(R;Z)$ from $\lvert \vec{v}_p\rvert=0.9$ to $\lvert \vec{v}_p\rvert=0.6$, indicating that their reward distributions $R$ are less fair. This occurs because, as $\lvert \vec{v}_p\rvert$ decreases, policies learned with individual reward fall into a degenerate state where only one pursuer experiences positive reward from capturing the evader. The other pursuers, never experiencing positive reward, slowly diverge from their previous greedy strategies and become incapable of capturing the evader. Their best strategy becomes hoping that the ``capturer" pursuer captures the evader, which results in a highly skewed capture distribution. Eventually this strategy fails when $\lvert \vec{v}_p\rvert$ drops low enough that all pursuers fail to capture the evader, which causes $I(R;Z)$ to fall again.

\paragraph{Policy learning hyperparameters}
All actors $\mu_\phi$ are trained with two hidden layers of size 128. Critics $Q_\omega$ are trained with three hidden layers of size 128. We use a learning rate of $1\textrm{e}^{-4}$ and $1\textrm{e}^{-3}$ for the actor and critic, respectively, and a gradient clip of 0.5 on both. Target networks are updated with Polyak averaging with $\tau=0.001$. We maintain a buffer $\mathcal{D}$ of length $500000$ and sample batches of size $512$. Finally, we use a discount factor $\gamma=0.99$. All values are the results of standard hyperparamter sweeps.

\paragraph{Experiments}
As described in Section \ref{sec_experiments}, each pursuit-evasion experiment includes $n=3$ pursuer agents and a single evader. The pursuers each train their own policy for a total of 125,000 episodes, during which their velocity is decreased from $\lvert \vec{v}_p\rvert = 1.2$ to $\lvert \vec{v}_p\rvert = 0.4$. The evader speed is fixed at $\lvert \vec{v}_e\rvert = 1.0$. After training, we test the resulting policies at discrete velocity intervals (e.g. $\lvert \vec{v}_p\rvert = 1.2$, $\lvert \vec{v}_p\rvert = 1.1$, etc), where a decrease in pursuer velocity represents a greater ``difficulty level" for the pursuers. Test-time performance, such as is shown in Figures \ref{fig_ind_vs_mut}b, \ref{fig_ind_vs_sym}c, \ref{fig_ind_vs_sym}d, and \ref{fig_fairness_utility} is averaged across 100 independent trajectories from five different random seeds each. All experiments leveraged an Nvidia GeForce GTX 1070 GPU with 8GB of memory.

\section{Additional information}
\label{apdx_additional_info}

\paragraph{Assets}
The pursuit-evasion environment used in this work is an extension of the environment introduced by \citet{lowe2017multi}. The original environment is open-sourced on Github under the MIT license. We cited the authors accordingly in the main text. Since we only change the reward function, we do not include the pursuit-evasion environment as a new asset in the supplementary material. Instead, we point to the original repo\footnote{https://github.com/openai/multiagent-particle-envs}. None of the assets used in this work contain personally identifiable information or offensive content.

\paragraph{Limitations} In our Fair-ER learning strategy, the fairness control parameter $\lambda$ allows the strength of equivariance regularization to be tuned to achieve the desired fairness-utility trade-off for the task at hand. Despite these benefits, tuning $\lambda$ is currently a manual process. An ideal solution, which is the subject of future work, would automatically find the appropriate $\lambda$ value for the desired fairness-utility trade-off. We highlight, however, that many regularization techniques in machine learning (most notably, $L_1$ and $L_2$ regularization) require manual tuning as well.

Moreover, our method does not directly address non-stationarity in multi-agent RL. Non-stationarity can cause high variance policy gradients and learning instability. This did not significantly impact training in our case, but possible solutions, such as centralized training \cite{foerster2016learning, lowe2017multi} will be considered in future work. Though we primarily evaluate our method in the context of pursuit-evasion games, we highlight that both Fair-E and Fair-ER are task-agnostic---they can be applied to any multi-agent environment.
\end{document}